\documentclass[runningheads]{llncs}

\usepackage{eccv}

\usepackage{eccvabbrv}

\usepackage{graphicx}
\usepackage{booktabs}

\usepackage[accsupp]{axessibility}  

\usepackage{array}
\usepackage{multirow}
\usepackage{makecell}
\usepackage{pifont}

\usepackage{wrapfig}
\usepackage{caption}

\usepackage{amsmath}
\usepackage{amssymb}

\usepackage[pagebackref]{hyperref}

\begin{document}

\title{On the Structural Failure of Chamfer Distance\\in 3D Shape Optimization}


\author{Chang-Yong Song \and David Hyde}

\authorrunning{C.-Y.~Song and D.~Hyde}

\institute{Vanderbilt University, Nashville, TN, USA\\
\email{\{chang-yong.song, david.hyde.1\}@vanderbilt.edu}}

\maketitle
\begin{abstract}
Chamfer distance is the standard training loss for point cloud reconstruction, completion, and generation, yet directly optimizing it can produce worse Chamfer values than not optimizing it at all.
We show that this paradoxical failure is gradient-structural. The per-point Chamfer gradient creates a many-to-one collapse that is the unique attractor of the forward term and cannot be resolved by any local regularizer, including repulsion, smoothness, and density-aware re-weighting.
We derive a necessary condition for collapse suppression: coupling must propagate beyond local neighborhoods.
In a controlled 2D setting, shared-basis deformation suppresses collapse by providing global coupling; in 3D shape morphing, a differentiable MPM prior instantiates the same principle, consistently reducing the Chamfer gap across 20 directed pairs with a 2.5$\times$ improvement on the topologically complex dragon.
The presence or absence of non-local coupling determines whether Chamfer optimization succeeds or collapses.
This provides a practical design criterion for any pipeline that optimizes point-level distance metrics.
\keywords{Chamfer Distance \and Many-to-One Collapse \and Differentiable Physics \and Point Cloud Optimization \and 3D Point Cloud Generation}
\end{abstract}

\section{Introduction}
\label{sec:intro}

3D Chamfer distance has been widely adopted as a standard metric~\cite{fan2017point, yuan2018pcn, achlioptas2018learning} and training loss for point cloud reconstruction, shape completion, and 3D generation.
Its known limitations (non-uniform distributions, surface holes, convergence instabilities) have been treated primarily as a metric design problem: density-aware~\cite{wu2021density}, contrastive~\cite{lin2023infocd}, and learnable~\cite{huang2024learnable} variants re-weight or restructure the distance, implicitly assuming that a better metric yields better optimization.
We show that the core failure is not a metric-level issue but a gradient-structural one.
Within any fixed nearest-neighbor (NN) assignment region, the forward Chamfer gradient has a unique attractor at the many-to-one collapse. This attractor persists under any local modification, including density re-weighting, repulsion, and smoothness---because all such modifications preserve the nearest-neighbor assignment structure.

In practice, directly minimizing Chamfer distance (Direct Chamfer Optimization, DCO) produces worse Chamfer values than a physics-only baseline that never optimizes Chamfer at all.

\begin{figure}[t!]
    \centering
    \includegraphics[width=\linewidth]{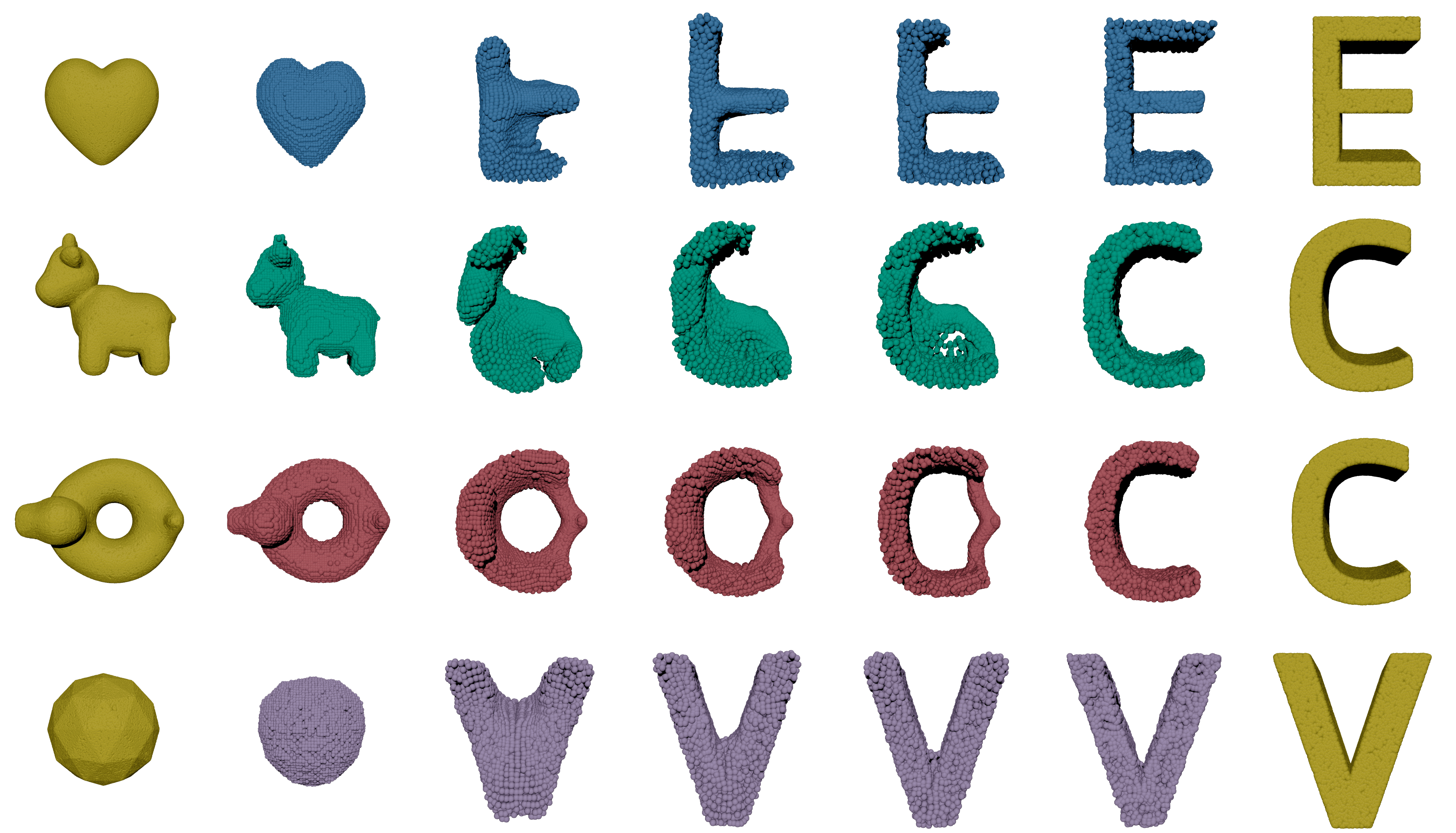}
    \caption{\textbf{Physics-guided Chamfer optimization on diverse source--target pairs.} Each row morphs a different source shape (Heart, Cow, Duck, Sphere) into a letter target. The differentiable physics prior provides non-local coupling, instantiating the design principle derived in Corollary~1, producing physically valid trajectories that faithfully capture the target geometry despite large topological differences between source and target
(e.g., the genus-1 torus-like mesh closing into C).}
    \label{fig:teaser}
\end{figure}

Figure~\ref{fig:teaser} previews the effect of global coupling on diverse source--target pairs: each source smoothly deforms into its letter target while maintaining volumetric integrity, even across large topological changes such as the genus-1 torus-like mesh morphing into C.

\begin{figure}[t!]
    \centering
    \includegraphics[width=\linewidth]{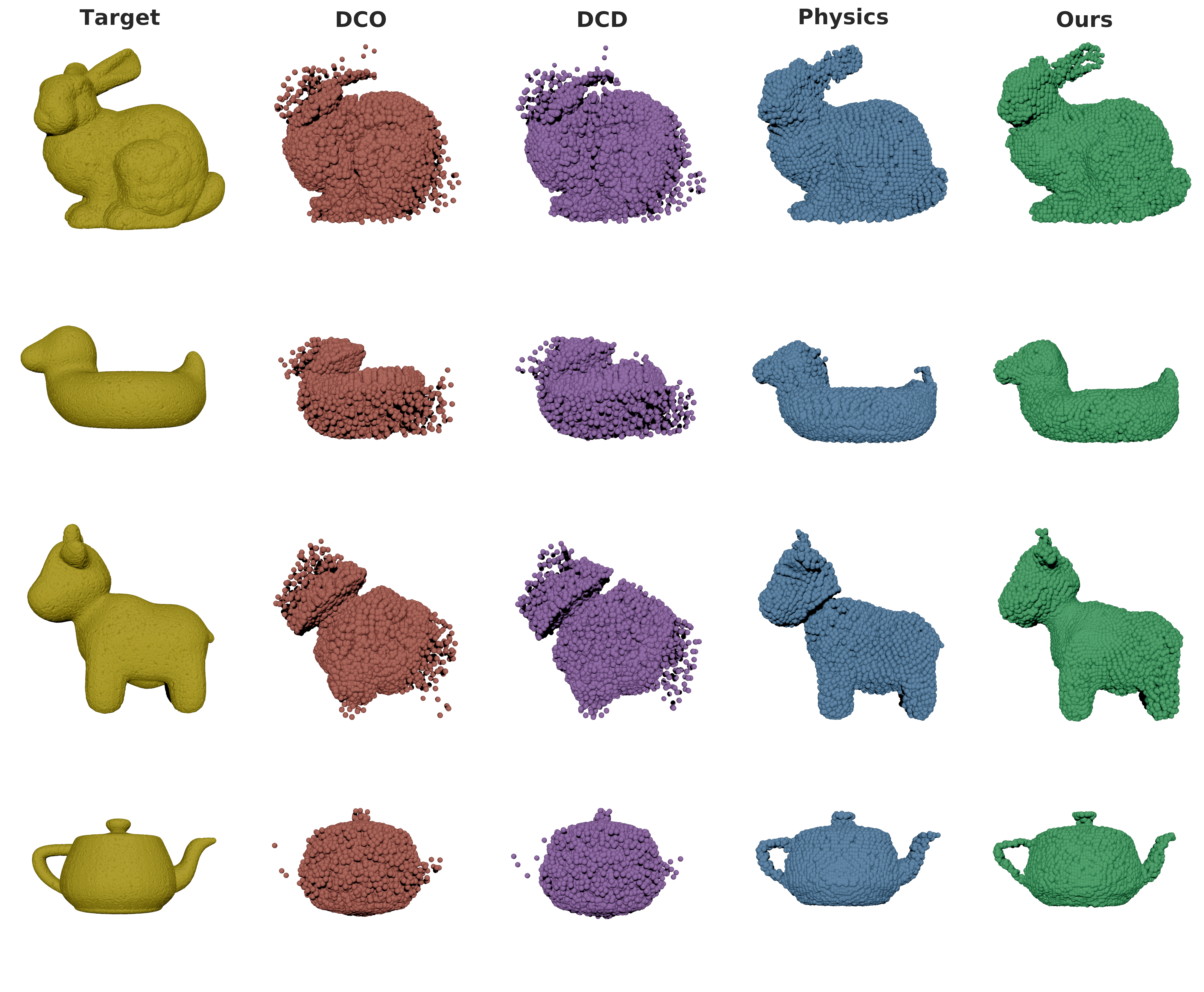}
    \caption{\textbf{Qualitative comparison on four target shapes.}
    DCO and DCD converge to structurally degraded volumetric shapes on all targets.
    The physics-only baseline preserves global plausibility but leaves a residual geometric gap.
    Our coupled physics--Chamfer method improves alignment while maintaining physically plausible volumetric structure.}
    \label{fig:4method}
\end{figure}

We take 3D shape morphing as a testbed for systematically examining these failure behaviors.
Shape morphing asks for a continuous trajectory from a source to a target shape~\cite{sorkine2007rigid,cosmo2020limp,eisenberger2021neuromorph,xu2025differentiable}; existing frameworks emphasize physical or structural losses but do not examine how they interact with point-level objectives like Chamfer distance.
Our primary diagnostic baseline (Direct Chamfer Optimization, DCO) optimizes point positions via the Chamfer gradient alone, without any morphing framework, physics prior, or temporal structure; the collapse it exhibits is therefore a property of the Chamfer objective itself, not an artifact of the morphing setting.

Unlike prior work that treats Chamfer failure as a metric design problem and proposes metric-level fixes, we show it is an optimization landscape problem: the collapse is a gradient attractor, not a measurement artifact.
Because the failure is gradient-structural rather than metric-level, redesigning the distance function cannot resolve it; the precise property required to escape collapse is coupling that propagates beyond local neighborhoods.
Our contributions separate into a general analysis of Chamfer distance (domain-agnostic) and a domain-specific validation using shape morphing:
\begin{enumerate}
    \item \textbf{Failure characterization (general).} We prove that many-to-one collapse is the unique attractor of the forward Chamfer gradient (Proposition~1), that the reverse term cannot separate collapsed points (Proposition~2), and that local regularizers cannot alter cluster-level drift (Proposition~3). Proposition~3 goes further: this is not an empirical limitation but a structural impossibility---no local regularizer, regardless of strength or form, can alter the cluster-level drift, because translational invariance guarantees that pairwise forces cancel at the centroid. These results hold for any point set optimized against a nearest-neighbor-based objective, regardless of application domain (Section~\ref{sec:analysis}).
    \item \textbf{Design principle for collapse suppression (general).} We derive a necessary condition that any successful remedy must satisfy: coupling must propagate beyond local neighborhoods to oppose the collapse attractor (Corollary~1). This rules out the entire class of local fixes---not just specific instances---and provides a testable design criterion without prescribing a particular implementation. We validate this principle across domains with a controlled 2D experiment (Section~\ref{sec:2d_experiment}) in which shared-basis deformation suppresses the collapse that per-point optimization and local repulsion cannot.
    \item \textbf{Verification via physics-guided morphing (domain-specific).} Using 3D shape morphing as a testbed, we implement the design principle through a differentiable Material Point Method (MPM) prior, the simplest available form of global coupling for particle systems, and verify Corollary~1 across 20 directed morphing pairs (Section~\ref{sec:exp_pairwise}), with a 2.5$\times$ improvement on the topologically complex dragon (Section~\ref{sec:dragon}). The minimal design isolates the effect of global coupling itself, confirming that the coupling property identified by our analysis, not architectural novelty, drives the improvement.
\end{enumerate}
The design principle (Corollary~1) applies wherever Chamfer distance is optimized; the specific implementation (MPM) is one natural choice for morphing, validated by the 2D experiment's confirmation that the principle generalizes beyond any single solver or domain (Section~\ref{sec:2d_experiment}).

\section{Related Work}
\label{sec:related}

\paragraph{Chamfer distance and its variants.}
Point cloud pipelines for reconstruction, completion, generation~\cite{qi2017pointnet,qi2017pointnet++,yuan2018pcn,yang2018foldingnet,groueix2018papier,achlioptas2018learning} and implicit surface learning~\cite{mildenhall2021nerf,wang2021neus,atzmon2020sal,niemeyer2020differentiable} rely on Chamfer distance as both a training loss and evaluation metric.
Known limitations have prompted density-aware~\cite{wu2021density}, learnable~\cite{huang2024learnable,urbach2020dpdist}, contrastive~\cite{lin2023infocd}, and curvature-aware~\cite{wang2018pixel2mesh,wen2019pixel2mesh++,gkioxari2019mesh} variants, as well as optimal-transport alternatives~\cite{rubner2000earth,nguyen2020distributional}.
These approaches treat the failure as a metric-level problem and propose improved distance functions.
In contrast, we show the failure is a gradient-structural property of nearest-neighbor-based objectives: local regularizers including repulsion~\cite{huang2009consolidation,yu2018ec}, smoothness~\cite{nealen2006laplacian}, volume preservation~\cite{huang2006subspace}, and normal consistency~\cite{hoppe1992surface,yu2018pu} are commonly combined with Chamfer distance (CD), but we prove that they cannot alter the cluster-level drift that drives many-to-one collapse (Proposition~3).

\paragraph{Point cloud optimization and regularization.}
Optimizing point sets against geometric objectives is central to surface reconstruction~\cite{kazhdan2006poisson,kazhdan2013screened}, shape completion~\cite{yuan2018pcn,xie2020grnet,wang2020cascaded}, and generative modeling~\cite{achlioptas2018learning,yang2019pointflow,luo2021diffusion}.
Repulsion-based spacing~\cite{huang2009consolidation,yu2018ec}, Laplacian smoothness~\cite{nealen2006laplacian,sorkine2004laplacian}, and volume preservation~\cite{huang2006subspace} are standard local regularizers aimed at improving distribution uniformity.
Optimal-transport objectives such as earth mover's distance~\cite{rubner2000earth} and sliced Wasserstein distance~\cite{nguyen2020distributional,cuturi2013sinkhorn} provide alternatives with different theoretical guarantees but at higher computational cost.
Our analysis complements these approaches by identifying a structural limitation shared by all local regularizers when combined with nearest-neighbor-based objectives.

\paragraph{Differentiable physics and shape morphing.}
Shape morphing spans ARAP deformation~\cite{sorkine2007rigid}, volume-preserving maps~\cite{aigerman2015orbifold}, latent-space interpolation~\cite{park2019deepsdf,mescheder2019occupancy,chen2019learning}, Hamiltonian dynamics~\cite{eisenberger2018divergence,eisenberger2021neuromorph}, neural flows~\cite{gupta2020neural,yang2019pointflow,chen2018neural}, and dynamic 3D Gaussian splatting~\cite{kerbl20233d,wu20244d,yang2024deformable,luiten2024dynamic}.
Differentiable physics frameworks~\cite{hu2019chainqueen,hu2019difftaichi,huang2021plasticinelab,murthy2020gradsim} enable gradient-based inverse problems; in particular, the MPM-based morphing of Xu~et~al.~\cite{xu2025differentiable} provides physically plausible trajectories for large deformations.
These methods target grid-level losses; what trade-offs emerge when a point-level Chamfer objective is introduced remains unexplored.
However, mesh-based~\cite{sorkine2007rigid,eisenberger2021neuromorph}, implicit~\cite{park2019deepsdf,mescheder2019occupancy}, and generative~\cite{achlioptas2018learning,yang2019pointflow} approaches operate on fundamentally different representations and do not produce volumetric particle trajectories, which prevents direct numerical comparison with our particle-based setting.

\section{Analysis: Why Chamfer Optimization Fails}
\label{sec:analysis}

\subsection{Experimental Setup}
\label{sec:setup}

As a controlled testbed, we use an isosphere with approximately 89K particles as the source shape and a surface-sampled Stanford Bunny with approximately 74K points as the target.
The evaluation metric is two-sided Chamfer distance, defined as $\mathrm{CD} = \sqrt{\mathrm{mean}(d_{\text{fwd}}^2) + \mathrm{mean}(d_{\text{rev}}^2)}$ where $d_{\text{fwd}}$ and $d_{\text{rev}}$ are the per-point nearest-neighbor distances in the source-to-target (s$\to$t) and target-to-source (t$\to$s) directions, computed via exact nearest-neighbor search.
DCO is trained with AMSGrad (learning rate 0.01) for 40 frames of 30 steps each, and the physics-only baseline runs the same differentiable MPM simulation~\cite{xu2025differentiable} with the same particle count and frame budget. All methods share identical evaluation protocols and initial conditions.

\subsection{Direct Chamfer Optimization and the Collapse Mechanism}
\label{sec:dco_collapse}

DCO directly optimizes particle positions with respect to Chamfer distance via gradient descent, using no physics simulation, no morphing framework, and no temporal structure.
It therefore isolates the Chamfer gradient in its purest form, so the resulting failure patterns are due to the objective's gradient structure, not any domain-specific design choice.
As the first row of Table~\ref{tab:regularization} shows, DCO records a two-sided CD of 0.286, roughly 1.3$\times$ larger than the physics-only baseline (0.217), despite optimizing the very metric used for evaluation.
Both components plateau well above the physics baseline (s$\to$t: 0.22 vs.\ 0.13; t$\to$s: 0.18 vs.\ 0.17), reflecting a degenerate equilibrium in which source points cluster at shared target locations, producing many-to-one overlap that the two-sided objective cannot resolve.

The root cause lies in the structure of the Chamfer gradient itself.
Up to a positive scalar factor, the per-point contribution of the forward (s$\to$t) term is directed as
\begin{equation}
\frac{\partial \mathrm{CD}_{\text{s}\to\text{t}}}{\partial p} \;\propto\; p - \mathrm{NN}_{\text{target}}(p),
\label{eq:chamfer_grad}
\end{equation}
which points toward the nearest target point and carries no information about other source points or global coverage.
When multiple source points share the same nearest target, this gradient pulls all of them toward the same location.
In Appendix~\ref{sec:proofs}, we formalize this observation through three propositions and a corollary.
Proposition~1 shows that many-to-one collapse is the unique attractor of the forward Chamfer gradient.
Specifically, within a Voronoi cell where $k$ source points share the same nearest target $q^*$, the per-point gradient reduces to $\tfrac{2}{N}(p_i - q^*)$, whose unique zero is $p_i = q^*$ for all $i$; the joint Hessian $\tfrac{2}{N}I_{3k\times 3k}$ is positive definite, confirming the collapse equilibrium is a stable minimum (full proof in Appendix~\ref{sec:proof_prop1}).
Proposition~2 shows that the reverse (t$\to$s) gradient is nonzero for at most one of $k$ collapsed points, leaving the remaining $k{-}1$ with zero gradient and no mechanism for separation.
Proposition~3 proves that any local regularizer (repulsion, smoothness, volume preservation) whose gradient depends only on a particle's neighborhood preserves the net cluster-level drift toward the target: pairwise internal forces cancel, so the cluster centroid remains governed solely by the forward term regardless of regularization strength.
Corollary~1 establishes that collapse suppression requires coupling that propagates beyond local neighborhoods.
These results predict the empirical patterns reported below.

\subsection{Insufficiency of Local Regularizers}
\label{sec:reg_failure}

The most natural attempt to mitigate collapse is to augment the Chamfer objective with local regularizers.
Proposition~3 predicts that such regularizers cannot alter the drift at the cluster level; we now verify this empirically.
Table~\ref{tab:regularization} compares five conditions in which repulsion, smoothness, and volume preservation are added incrementally to DCO, alongside the physics-only baseline.

\begin{table}[t]
\centering
\scriptsize
\caption{Effect of adding local regularizers and density-aware re-weighting (DCD~\cite{wu2021density}) to DCO.  No condition improves t$\to$s coverage over pure DCO; the physics-only baseline outperforms all conditions.}
\label{tab:regularization}
\begin{tabular}{lcccc}
\toprule
Condition & s$\to$t & t$\to$s & Two-sided & vs DCO \\
\midrule
DCO (no reg.) & 0.220 & 0.182 & 0.286 & -- \\
+ Repulsion ($\lambda$=0.01) & 0.232 & 0.182 & 0.295 & worse \\
+ Repulsion ($\lambda$=0.1) & 0.250 & 0.182 & 0.309 & worse \\
+ Rep.\ + Smooth & 0.240 & 0.181 & 0.301 & worse \\
+ Rep.\ + Smooth + Vol. & 0.240 & 0.194 & 0.309 & worst \\
DCD~\cite{wu2021density} & 0.197 & 0.182 & 0.268 & $\sim$same \\
\midrule
Physics-only~\cite{xu2025differentiable} & \textbf{0.130} & \textbf{0.174} & \textbf{0.217} & 1.32$\times$ better \\
\bottomrule
\end{tabular}
\end{table}

Two patterns are evident.
First, regardless of the type or strength of regularization, the t$\to$s component remains nearly unchanged at approximately 0.181--0.194, indicating that coverage failure persists intact.
Second, adding regularizers actually increases the two-sided CD: the fully regularized setting yields the worst CD among all conditions.
Density-aware Chamfer Distance (DCD)~\cite{wu2021density}, which explicitly re-weights each point's contribution by the inverse of its local density, also fails to improve coverage: its t$\to$s (0.182) is identical to baseline DCO.
This confirms that the collapse is not a density-estimation problem that per-point re-weighting can fix, but a structural property of the Chamfer gradient direction itself, exactly as predicted by Proposition~3.

\subsection{Generalization Across Target Shapes}
\label{sec:shape_generalization}

We verify that these patterns are not specific to a single shape pair by repeating the experiment with the Sphere as source and four targets of varying geometric complexity.

\begin{table}[t]
\centering
\scriptsize
\caption{Generalization across four target shapes (Sphere source).  DCO and DCD produce nearly identical results on every shape; the t$\to$s component is virtually unchanged by density re-weighting, confirming that the coverage failure is structural.  The topologically complex dragon is analyzed separately in Section~\ref{sec:dragon}.}
\label{tab:shape_generalization}
\begin{tabular}{lccccc}
\toprule
Target & DCO & DCD~\cite{wu2021density} & Physics & DCO/Phys & DCD/Phys \\
\midrule
Bunny   & 0.286 & 0.268 & \textbf{0.217} & 1.3$\times$ & 1.2$\times$ \\
Duck     & 0.744 & 0.740 & \textbf{0.247} & 3.0$\times$ & 3.0$\times$ \\
Cow    & 0.449 & 0.436 & \textbf{0.228} & 2.0$\times$ & 1.9$\times$ \\
Teapot  & 0.209 & 0.199 & 0.230          & 0.9$\times$ & 0.9$\times$ \\
\bottomrule
\end{tabular}
\end{table}

As shown in Table~\ref{tab:shape_generalization}, DCO and DCD produce 1.3--3.0$\times$ worse two-sided CD than the physics-only baseline on geometrically complex targets, while only the nearly convex teapot falls within comparable range.
The t$\to$s coverage component is virtually identical between DCO and DCD on every shape, providing direct evidence that density re-weighting does not alter the many-to-one gradient structure.
The severity of collapse scales with target complexity, but the structural insensitivity to density-aware correction is universal.
Evaluating all 20 directed pairs among five shapes (Section~\ref{sec:exp_pairwise}) confirms that collapse is not an artifact of convex initialization: DCO produces worse two-sided CD than physics-only on the majority of pairs regardless of source geometry.
We defer the most topologically complex target (dragon, 2.1$\times$ collapse ratio) to Section~\ref{sec:dragon}.

\subsection{Cross-Domain Validation: 2D Collapse Experiment}
\label{sec:2d_experiment}

The preceding analysis and 3D experiments use shape morphing as a testbed.
To confirm that collapse is intrinsic to the Chamfer objective rather than specific to 3D morphing, we conduct a controlled 2D experiment: 600 points on a circle boundary are optimized to match a 200-point star target (Figure~\ref{fig:2d_experiment}).

\begin{figure}[t]
\centering
\includegraphics[width=\linewidth]{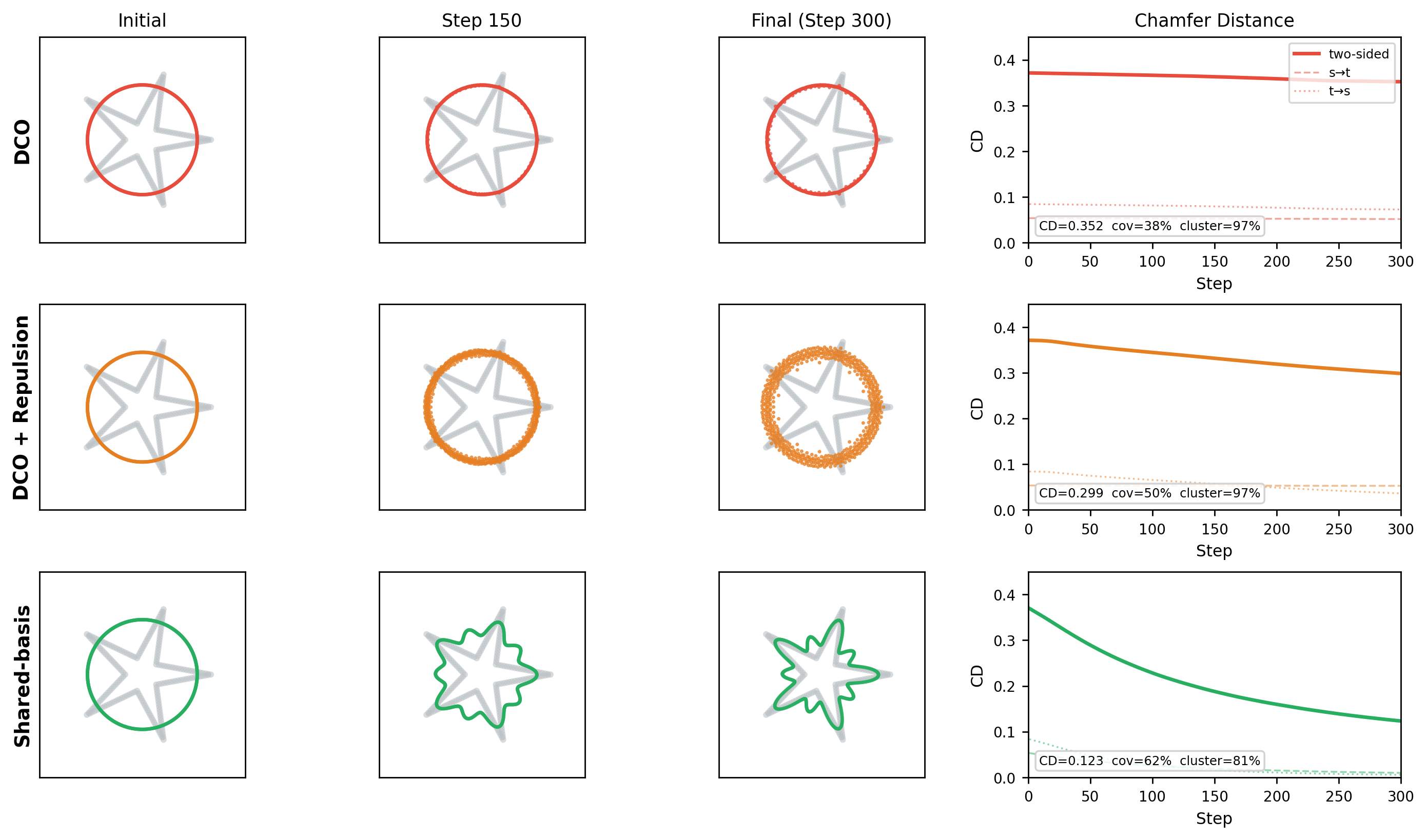}
\caption{\textbf{2D collapse experiment (circle$\to$star).}
\textbf{Top}: Per-point CD optimization (DCO) collapses source points onto star vertices.
\textbf{Middle}: Adding local repulsion does not resolve collapse (Proposition~3).
\textbf{Bottom}: Shared-basis (Fourier) deformation provides global coupling that suppresses collapse.
Gray: target; colored: source at initial / mid / final steps.  Right column: CD convergence.}
\label{fig:2d_experiment}
\end{figure}

We compare three strategies:
(i)~\textbf{DCO}: per-point CD gradient descent collapses onto star vertices (coverage 38\%, cluster fraction 97\%), exactly as Proposition~1 predicts.
(ii)~\textbf{DCO + Repulsion}: a local $k$-NN repulsive force ($1/d^2$) leaves cluster fraction at 97\%, consistent with Proposition~3.
(iii)~\textbf{Shared-basis deformation}: the boundary is parameterized as $r(\theta) = a_0 + \sum_{k}[a_k\cos k\theta + b_k\sin k\theta]$ with 12 Fourier modes; all points share the same coefficients.
CD gradients backpropagate through this parameterization, coupling all points globally, analogous to MPM's shared Eulerian grid.
The result: 2.9$\times$ lower CD (0.123 vs.\ 0.352) and reduced clustering (81\% vs.\ 97\%).
This confirms that (a)~collapse is domain-agnostic, and (b)~the design principle of Corollary~1 holds: any form of global coupling, not only MPM, suppresses collapse.
Full details are provided in Appendix~\ref{sec:2d_collapse}.

\section{Validating Global Coupling via Physics-Guided Optimization}
\label{sec:method}

The analysis in Section~\ref{sec:analysis} establishes that collapse suppression requires coupling beyond local neighborhoods.
To test this, we realize global coupling through the differentiable MPM framework of Xu~et~al.~\cite{xu2025differentiable}, built on the material point method~\cite{sulsky1994particle,stomakhin2013material,hu2018moving}.
In MPM, all particles are coupled through a shared Eulerian grid: moving a single particle generates elastic stress that propagates through the continuum, providing the global interaction that local regularizers lack.
We deliberately keep the integration simple to isolate the effect of global coupling itself, rather than introducing architectural novelty.
Simulation details and material parameters are provided in Appendix~\ref{sec:implementation}.

\paragraph{The Chamfer gap.}
The physics prior optimizes a grid-level mass-density loss that measures the discrepancy between the current and target mass distributions on the Eulerian grid:
\begin{equation}
\mathcal{L}_{\text{physics}} = \sum_{i} \tfrac{1}{2}\bigl(\ln(m_i + 1 + \epsilon) - \ln(m^{*}_i + 1 + \epsilon)\bigr)^{2},
\label{eq:physics_loss}
\end{equation}
where $m_i$ and $m^{*}_i$ are the current and target grid node masses, and the log transform ensures scale-invariant matching across regions of varying density.
This loss guarantees collapse-free trajectories, but operates on voxelized mass distributions rather than individual particle positions, leaving a residual Chamfer gap with respect to point-level distance metrics.
The core challenge is to reduce this gap without reintroducing collapse.

\paragraph{Joint optimization.}
We integrate a bidirectional Chamfer loss directly into the MPM computational graph. Given source particles $\{p_i\}_{i=1}^{N}$ at the final timestep and target points $\{q_j\}_{j=1}^{M}$, the forward term $\mathcal{L}_{\text{fwd}}$ is the standard source-to-target Chamfer loss whose per-point gradient was analyzed in Eq.~\eqref{eq:chamfer_grad}. The reverse term pulls each uncovered target point toward its nearest source particle:
\begin{equation}
\mathcal{L}_{\text{rev}} = \frac{1}{M}\sum_{j=1}^{M}\|q_j - \mathrm{NN}_{\mathcal{S}}(q_j)\|^2.
\label{eq:rev_chamfer}
\end{equation}
The total loss combines the physics prior with this bidirectional Chamfer objective, with all weights varying as a function of the current frame $t$:
\begin{equation}
\begin{gathered}
\mathcal{L}_{\text{total}} = w_{\text{phys}}(t)\,\mathcal{L}_{\text{physics}}
  + w_{\text{ch}}(t)\,\mathcal{L}_{\text{chamfer}}, \\
\mathcal{L}_{\text{chamfer}} = \mathcal{L}_{\text{fwd}}
  + w_{\text{rev}}(t)\,\mathcal{L}_{\text{rev}}.
\end{gathered}
\label{eq:joint_loss}
\end{equation}
A na\"ive joint optimization fails because the reverse term (t$\to$s) introduces strong shrinkage pressure: when the physics weight decreases, elastic resistance weakens while reverse pressure remains high, causing transient contraction.
We introduce a coupled schedule that ties the reverse weight to the physics weight:
\begin{equation}
w_{\text{rev}}(t) = w_{\text{rev,base}} \cdot w_{\text{phys}}(t).
\label{eq:coupled_schedule}
\end{equation}
When physics is strong, full reverse pressure maintains coverage; as physics weakens, reverse pressure automatically decreases.
This directly follows from the analysis: physics coupling provides the counter-force that offsets the many-to-one collapse pressure.

\paragraph{Reverse gradient clamping.}
The reverse Chamfer gradient concentrates on a small subset of boundary particles, producing sparse, high-magnitude updates.
We clamp outlier magnitudes exceeding $\kappa$ times the mean active magnitude ($\kappa=3$) to prevent destabilizing the physics solver.
The forward (s$\to$t) gradient needs no clamping as every source particle receives exactly one gradient vector.

\paragraph{System overview.}
Starting from a source mesh, we sample MPM particles and optimize the per-step deformation field for 40 frames using the joint loss of Eq.~\eqref{eq:joint_loss} with the coupled schedule of Eq.~\eqref{eq:coupled_schedule}.
The schedule transitions from physics-dominated to Chamfer-guided refinement, with the coupled reverse weight ensuring that shrinkage pressure never exceeds elastic resistance.

\section{Experiments}
\label{sec:experiments}

\subsection{Setup}
\label{sec:exp_setup}

We evaluate on four target shapes (Bunny, Duck, Cow, Teapot) plus a separate dragon case study (Section~\ref{sec:dragon}).
All experiments use a $32^3$ MPM grid with 3 particles per cell (${\sim}$37K particles), neo-Hookean elasticity, 10 timesteps per frame, and 40 frames.
We compare physics-only ($w_{\text{chamfer}}{=}0$) against our joint optimization ($w_{\text{chamfer}}{=}10$, linear ramp from frame~5 to~15, coupled reverse schedule, $\kappa{=}3$ clamping, physics decay after frame~20).
No existing volumetric morphing method optimizes Chamfer distance on particle systems; DCO and DCD isolate the Chamfer gradient in its purest form, providing the most controlled comparison for validating our analysis.

\subsection{Results: Sphere Source}
\label{sec:exp_results}

\begin{table}[t]
\centering
\scriptsize
\caption{Chamfer distance at frame~39 for physics-only and our method (Sphere source).  Our method improves s$\to$t on all four shapes (average $-$10.8\%) and two-sided CD on all four.}
\label{tab:coupled_results}
\begin{tabular}{lcccccc}
\toprule
 & \multicolumn{2}{c}{s$\to$t $\downarrow$} & \multicolumn{2}{c}{t$\to$s $\downarrow$} & \multicolumn{2}{c}{Two-sided $\downarrow$} \\
\cmidrule(lr){2-3}\cmidrule(lr){4-5}\cmidrule(lr){6-7}
Target & Physics & Ours & Physics & Ours & Physics & Ours \\
\midrule
Bunny   & 0.181 & \textbf{0.157} & 0.174 & \textbf{0.163} & 0.251 & \textbf{0.226} \\
Duck     & 0.184 & \textbf{0.173} & \textbf{0.214} & 0.215 & 0.282 & \textbf{0.276} \\
Cow    & 0.178 & \textbf{0.167} & 0.185 & \textbf{0.167} & 0.257 & \textbf{0.236} \\
Teapot  & 0.180 & \textbf{0.148} & 0.190 & \textbf{0.181} & 0.262 & \textbf{0.234} \\
\bottomrule
\end{tabular}
\end{table}

Table~\ref{tab:coupled_results} reports results at the final frame.
The s$\to$t component improves on all four shapes (average $-$10.8\%), ranging from $-$6.0\% (Duck) to $-$17.8\% (teapot).
Two-sided CD improves on all four shapes, confirming that joint optimization reduces the geometric gap without inducing the severe collapse of DCO.
To verify that improvement is not metric-specific, Table~\ref{tab:hausdorff} reports Hausdorff distance and F1-score ($\tau{=}0.2$) for the same configurations.
Frame-wise convergence curves are provided in Appendix~\ref{sec:cd_convergence}; a trajectory comparison with DCD is shown in Figure~\ref{fig:trajectory}.

\begin{table}[t]
\centering
\scriptsize
\caption{Hausdorff distance ($\downarrow$) and F1-score ($\uparrow$, $\tau{=}0.2$) at frame~39 (Sphere source).  Improvements are consistent across metrics, confirming that the Chamfer gap reduction reflects genuine geometric improvement.}
\label{tab:hausdorff}
\begin{tabular}{lcccc}
\toprule
 & \multicolumn{2}{c}{Hausdorff $\downarrow$} & \multicolumn{2}{c}{F1 ($\tau{=}0.2$) $\uparrow$} \\
\cmidrule(lr){2-3}\cmidrule(lr){4-5}
Target & Physics & Ours & Physics & Ours \\
\midrule
Bunny   & 0.963 & \textbf{0.580} & 0.757 & \textbf{0.851} \\
Duck     & 0.930 & \textbf{0.646} & 0.626 & \textbf{0.643} \\
Cow    & 0.715 & \textbf{0.672} & 0.733 & \textbf{0.819} \\
Teapot  & 0.954 & \textbf{0.842} & 0.688 & \textbf{0.810} \\
\bottomrule
\end{tabular}
\end{table}

\subsection{Results: Pairwise Morphing}
\label{sec:exp_pairwise}

\begin{table}[t]
\centering
\scriptsize
\caption{Pairwise morphing results (two-sided CD) in matrix format.  Each cell shows Physics\,/\,\textbf{Ours}.  \textbf{Bold}: improvement over physics-only.  Our method improves two-sided CD on 16 of 20 directed pairs.}
\label{tab:pairwise}
\setlength{\tabcolsep}{3pt}
\begin{tabular}{l ccccc}
\toprule
Src\,$\downarrow$\,/\,Tgt\,$\rightarrow$ & Sphere & Bunny & Duck & Cow & Teapot \\
\midrule
Sphere & ---                      & 0.251\,/\,\textbf{0.226} & 0.282\,/\,\textbf{0.276} & 0.257\,/\,\textbf{0.236} & 0.262\,/\,\textbf{0.234} \\
Bunny  & 0.295\,/\,\textbf{0.250} & --- & 0.297\,/\,\textbf{0.288} & 0.256\,/\,\textbf{0.240} & 0.286\,/\,\textbf{0.259} \\
Duck    & \textbf{0.293}\,/\,0.296 & 0.283\,/\,\textbf{0.276} & --- & \textbf{0.280}\,/\,0.295 & 0.326\,/\,\textbf{0.309} \\
Cow   & \textbf{0.291}\,/\,0.315 & 0.275\,/\,\textbf{0.252} & 0.425\,/\,\textbf{0.284} & --- & 0.318\,/\,\textbf{0.293} \\
Teapot & 0.226\,/\,\textbf{0.213} & 0.257\,/\,\textbf{0.252} & 0.355\,/\,\textbf{0.312} & \textbf{0.237}\,/\,0.259 & --- \\
\bottomrule
\end{tabular}
\end{table}

Table~\ref{tab:pairwise} reports two-sided CD across all 20 directed pairs.
Our method improves s$\to$t on all 20 pairs (see Appendix~\ref{sec:pairwise_s2t} for per-pair breakdown) and two-sided CD on 16 of 20.
The four regressions all target the Sphere or Cow, where the physics baseline already achieves near-optimal coverage.

\subsection{Case Study: Topologically Complex Target (Dragon)}
\label{sec:dragon}

The Stanford dragon presents the most challenging target in our benchmark: its deep concavities, thin appendages, and high genus produce the most severe many-to-one collapse under DCO (2.1$\times$ worse than physics-only).
We isolate this case to examine how particle resolution interacts with global coupling on topologically extreme geometry.

\begin{table}[t]
\centering
\scriptsize
\caption{Sphere$\to$dragon results. At the default resolution of 3 particles per cell, our method improves s$\to$t but worsens t$\to$s due to insufficient particle count. Increasing to 4 particles per cell resolves this limitation, achieving a \textbf{2.5$\times$} improvement in two-sided CD over the physics-only baseline.  Hausdorff distance and F1-score confirm the same trend.}
\label{tab:dragon}
\begin{tabular}{lccccc}
\toprule
Method & s$\to$t $\downarrow$ & t$\to$s $\downarrow$ & Two-sided $\downarrow$ & Hausdorff $\downarrow$ & F1 $\uparrow$ \\
\midrule
DCO              & 0.609 & 0.255 & 0.660 & 2.695 & 0.466 \\
DCD~\cite{wu2021density} & 0.608 & 0.255 & 0.659 & 2.849 & 0.463 \\
Physics-only     & 0.190 & \textbf{0.246} & 0.311 & 3.202 & 0.532 \\
Ours (3 particles per cell)   & \textbf{0.160} & 0.296 & 0.336 & 3.251 & 0.493 \\
Ours (4 particles per cell)   & \textbf{0.089} & \textbf{0.087} & \textbf{0.124} & \textbf{0.780} & \textbf{0.792} \\
\bottomrule
\end{tabular}
\end{table}

\begin{figure}[t!]
    \centering
    \includegraphics[width=\linewidth]{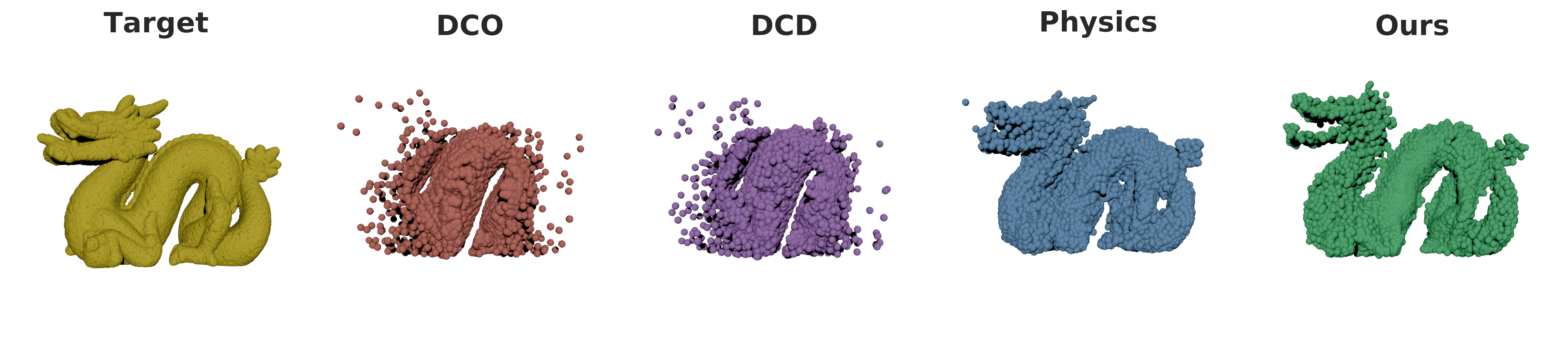}
    \caption{\textbf{Sphere$\to$dragon: qualitative comparison.}
    DCO and DCD produce severe surface collapse.
    Physics-only preserves volumetric structure but leaves a large geometric gap.
    Our method at 4 particles per cell closely matches the target geometry while maintaining physical validity.}
    \label{fig:dragon}
\end{figure}

Table~\ref{tab:dragon} reports the full progression.
At 3 particles per cell (${\sim}$37K particles), our method improves s$\to$t by 15.8\% but the conservative particle count cannot resolve the dragon's fine geometric detail, causing t$\to$s to increase.
Note that DCO achieves a lower Hausdorff distance than physics-only (2.695 vs.\ 3.202) because collapsed points sit directly on target surfaces, minimizing worst-case distance; yet its F1-score is the lowest among all methods (0.466), confirming that low Hausdorff under collapse is a measurement artifact of poor coverage, not genuine geometric quality.
Increasing to 4 particles per cell (${\sim}$89K particles) with a C++ inline Chamfer gradient implementation eliminates this trade-off: the two-sided CD drops to \textbf{0.124}, a \textbf{2.5$\times$ improvement} over the physics-only baseline, with Hausdorff and F1 confirming the same trend (0.780 and 0.792, respectively).
Figure~\ref{fig:dragon} shows the qualitative comparison.
We adopt 3 particles per cell as the default for the main pairwise evaluation because increasing to 4 particles per cell causes spray-like artifacts and occasional NaN divergence on simpler shapes; the dragon result demonstrates that per-shape resolution tuning can unlock substantially better performance on complex targets.

\begin{figure}[t!]
    \centering
    \includegraphics[width=\linewidth]{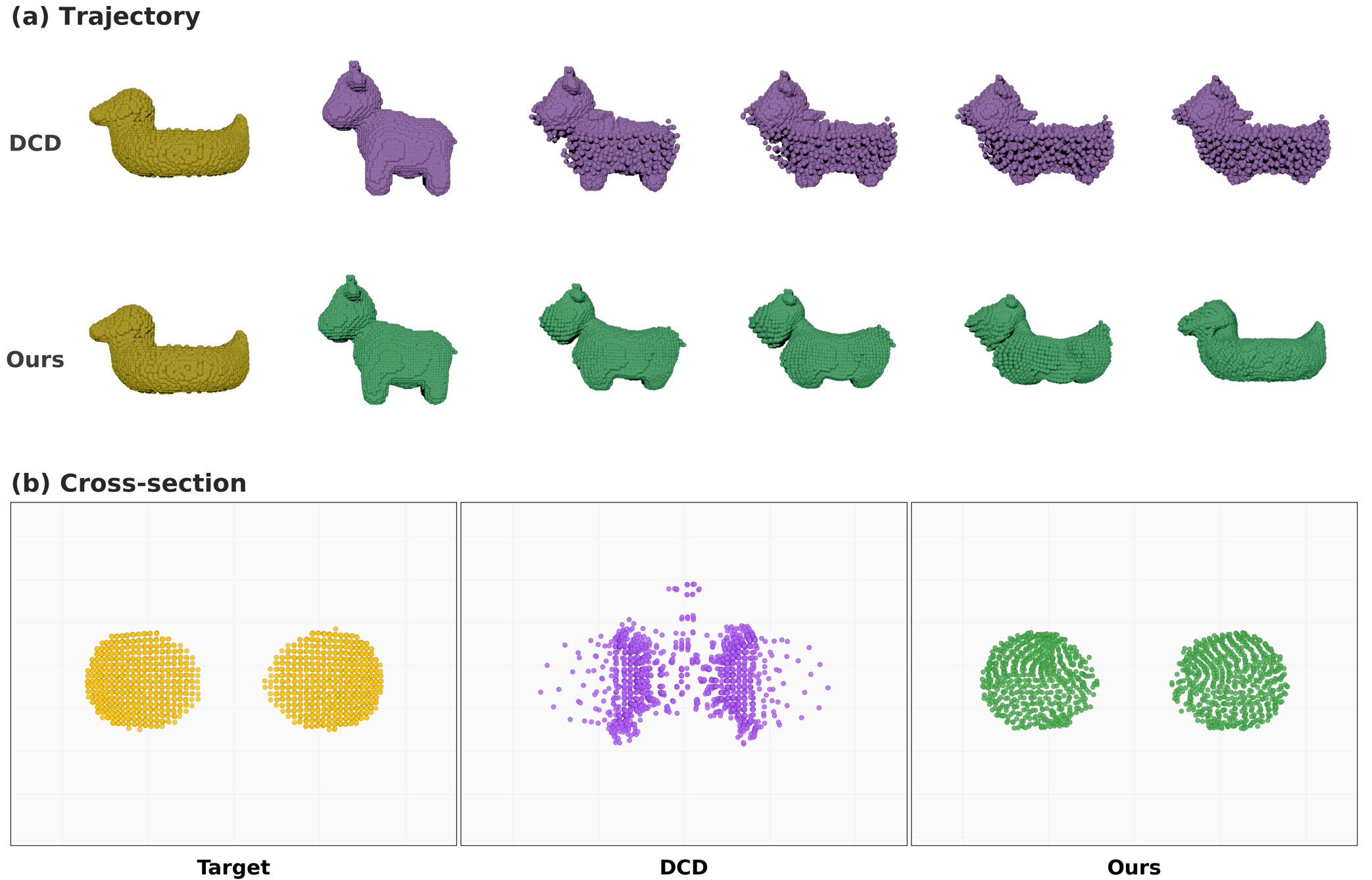}
    \caption{\textbf{DCD vs.\ Ours on Cow$\to$Duck: trajectory and internal structure.}
    Top two rows: morphing trajectory showing the target (yellow, leftmost).
    DCD dissolves the source structure into a noisy surface approximation;
    our method maintains volumetric coherence throughout.
    Bottom: cross-section at the mid-body plane.
    The target and our method show uniformly filled interiors, while DCD collapses particles onto the surface, leaving the interior hollow.
    16.5\% of DCD particles have nearest-neighbor distance $<$0.01 (effectively overlapping), compared to 0\% for our method.}
    \label{fig:trajectory}
\end{figure}

Figure~\ref{fig:trajectory} provides qualitative evidence of how collapse manifests in practice: DCD's density re-weighting cannot prevent interior hollowing, whereas our method's global coupling preserves volumetric integrity throughout the trajectory.

\section{Discussion}
\label{sec:discussion}

\paragraph{Generality and global coupling.}
The theoretical results (Propositions~1--3, Corollary~1) hold for arbitrary point sets optimized against a nearest-neighbor-based objective, independent of any morphing framework or physics prior.
The gradient attractor (Proposition~1) and the inability of local remedies to alter it (Proposition~3) therefore apply equally to point cloud generation~\cite{achlioptas2018learning,yang2019pointflow}, shape completion~\cite{yuan2018pcn,xie2020grnet}, and neural implicit fitting~\cite{gropp2020implicit}.
For example, Proposition~1 predicts that any CD-trained generative model will produce many-to-one clustering at high-density target regions; the ``mode collapse'' patterns reported in point cloud GANs~\cite{achlioptas2018learning} and the non-uniform sampling artifacts in flow-based generators~\cite{yang2019pointflow} are consistent with this prediction.
Similarly, CD-supervised shape completion networks~\cite{yuan2018pcn,xie2020grnet} are known to produce blurred outputs near thin structures, exactly the geometry where Voronoi cells are large and the collapse attractor is strongest.
Proposition~2 explains why switching from unidirectional to bidirectional CD does not resolve the issue: the reverse gradient can separate at most one of $k$ collapsed points, leaving the remaining $k{-}1$ in a zero-gradient deadlock, consistent with the limited diversity improvements reported when adding the reverse term in generative settings~\cite{achlioptas2018learning}.
Proposition~3 further explains why popular remedies such as repulsion penalties and density-aware re-weighting (DCD)~\cite{wu2021density} fail to eliminate collapse: any local pairwise regularizer cancels at the cluster centroid, leaving the net drift toward the target unchanged.
These are not architecture-specific failures but consequences of the NN-based objective's gradient structure.
MPM provides the necessary global coupling (Corollary~1) via continuum stress propagation; the analysis identifies the required property of non-local interaction without prescribing a particular implementation.
In other domains, non-local coupling could arise from, \eg, graph-based message passing or global latent variables.
The 2D cross-domain experiment (Section~\ref{sec:2d_experiment}) confirms that this principle generalizes: shared-basis deformation, an entirely different form of global coupling, suppresses collapse in 2D just as MPM does in 3D.

\paragraph{Stability--accuracy trade-off.}
Increasing from 3 to 4 particles per cell achieves a 2.5$\times$ improvement on the dragon (Section~\ref{sec:dragon}) but causes spray artifacts on simpler shapes.
Extending the stable operating regime to higher resolutions across all shape categories remains the primary open challenge.

\paragraph{Practical guidelines.}
\label{sec:guidelines}
Our analysis enables practical recommendations based on the DCO/Physics ratio in Table~\ref{tab:shape_generalization}:
\emph{Low risk} (nearly convex targets, DCO/Phys $\leq 1\times$): CD can be used alone.
\emph{Moderate risk} (1.3--2.0$\times$, \eg, bunny, Cow): a global coupling mechanism is recommended.
\emph{High risk} ($>2\times$, \eg, Duck; see also the dragon case study in Section~\ref{sec:dragon}): CD without global coupling is strongly discouraged.

\section{Limitations and Future Work}
\label{sec:limitations}

Our analysis is general: the 2D experiment (Section~\ref{sec:2d_experiment}) confirms that collapse and its suppression via global coupling are domain-agnostic, but 3D validation currently focuses on shape morphing.
Identifying domain-appropriate forms of global coupling for point cloud generation (e.g., graph-based message passing), shape completion (e.g., implicit field regularization), and neural implicit fitting is a natural next step; the design criterion established by Corollary~1 provides a clear starting point for such extensions.
On the simulation side, all experiments rely on neo-Hookean elasticity on a $32^{3}$ grid; heterogeneous materials and higher resolutions may alter the physics-only baseline.
Applying our analysis to EMD~\cite{rubner2000earth} and sliced Wasserstein~\cite{nguyen2020distributional,cuturi2013sinkhorn} would clarify whether the collapse is specific to nearest-neighbor-based objectives or common to point-level metrics more broadly.

\section{Conclusion}
\label{sec:conclusion}

We have identified a structural failure mode of Chamfer distance optimization: the many-to-one gradient structure drives optimization toward collapse-prone equilibria that no local remedy can resolve.
Through formal analysis (Propositions~1--3; proofs in Appendix~\ref{sec:proofs}), we derive a necessary condition for collapse suppression: coupling must propagate beyond local neighborhoods (Corollary~1).
This design principle is domain-agnostic, as confirmed by a controlled 2D experiment in which shared-basis deformation suppresses collapse that per-point optimization cannot.
In 3D shape morphing, realizing global coupling through a differentiable MPM prior validates the principle across 20 directed shape pairs, with a 2.5$\times$ improvement on the topologically complex dragon at higher particle resolution (Section~\ref{sec:dragon}).
Chamfer distance remains a valid evaluation metric; the failure we identify is specific to its use as an optimization objective, where the gradient structure, not the distance value, governs convergence.
Our analysis provides a clear design criterion for any pipeline that optimizes point-level distance metrics: when CD is used as a loss, the architecture must provide non-local coupling to counteract the collapse attractor; metric-level redesign alone cannot.

\bibliographystyle{splncs04}
\bibliography{main}

\appendix
\section*{Appendix}

\section{Formal Proofs}
\label{sec:proofs}

We provide complete proofs of Propositions~1--3 and Corollary~1 stated in the main paper.

\paragraph{Setup.}
Let $\mathcal{S} = \{p_1, \ldots, p_N\} \subset \mathbb{R}^3$ denote source points and $\mathcal{T} = \{q_1, \ldots, q_M\} \subset \mathbb{R}^3$ denote fixed target points.
The forward (s$\to$t) Chamfer loss is
\begin{equation}
\mathcal{L}_{\text{fwd}} = \frac{1}{N}\sum_{i=1}^{N}\|p_i - \mathrm{NN}_{\mathcal{T}}(p_i)\|^2,
\label{eq:fwd_loss}
\end{equation}
where $\mathrm{NN}_{\mathcal{T}}(p) = \arg\min_{q \in \mathcal{T}} \|p - q\|$.
The reverse (t$\to$s) Chamfer loss is
\begin{equation}
\mathcal{L}_{\text{rev\_app}} = \frac{1}{M}\sum_{j=1}^{M}\|q_j - \mathrm{NN}_{\mathcal{S}}(q_j)\|^2.
\label{eq:rev_loss_app}
\end{equation}
All analysis is conducted within a fixed region of nearest-neighbor assignment (\ie, within a single Voronoi cell where $\mathrm{NN}_{\mathcal{T}}(p_i)$ does not change).

\subsection{Proposition 1: Many-to-One Collapse Is the Unique Attractor}
\label{sec:proof_prop1}

\begin{proposition}
Consider a cluster $C = \{p_1, \ldots, p_k\}$ of source points sharing the same nearest target $q^* = \mathrm{NN}_{\mathcal{T}}(p_i)$ for all $i$.
Under the forward Chamfer gradient:
\begin{enumerate}
    \item[(a)] The state $p_i = q^*$ for all $i$ is the unique equilibrium of $C$.
    \item[(b)] The Hessian at this equilibrium is positive definite, making it a stable attractor.
\end{enumerate}
\end{proposition}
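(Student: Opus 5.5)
The plan is to work entirely inside the fixed nearest-neighbor assignment region stipulated in the Setup, where $\mathrm{NN}_{\mathcal{T}}(p_i)=q^*$ is constant for every $p_i\in C$. In that region the forward loss of Eq.~\eqref{eq:fwd_loss}, restricted to the variables of $C$, is a smooth quadratic:
\[
\mathcal{L}_{\text{fwd}}\big|_C=\frac{1}{N}\sum_{i=1}^{k}\|p_i-q^*\|^2+\text{const},
\]
where the constant collects the contributions of source points outside $C$. Those terms do not depend on $p_1,\dots,p_k$, because the forward term has no cross-point interaction. The first step is to record this decoupling explicitly. It is the structural fact behind both parts.

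For part (a), I will differentiate term by term to get $\nabla_{p_i}\mathcal{L}_{\text{fwd}}=\tfrac{2}{N}(p_i-q^*)$. This is the precise form of Eq.~\eqref{eq:chamfer_grad}. The gradient flow $\dot p_i=-\tfrac{2}{N}(p_i-q^*)$ then has equilibria exactly where every gradient vanishes. Since $\tfrac{2}{N}\neq 0$, this forces $p_i=q^*$ for each $i$ separately, so the equilibrium exists and is unique in $\mathbb{R}^{3k}$. I will also check that it is consistent with the assignment region: $q^*$ is trivially its own nearest target, assuming $q^*$ is not equidistant with another target, which I will state as a standard genericity convention. The equilibrium therefore lies in the closure of the region where the analysis is valid.

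For part (b), I will differentiate once more. The mixed blocks $\partial^2/\partial p_i\partial p_j$ vanish for $i\neq j$ by the decoupling, and the diagonal blocks equal $\tfrac{2}{N}I_3$. The joint Hessian is therefore $\tfrac{2}{N}I_{3k}$, with all eigenvalues equal to $2/N>0$, so it is positive definite. To justify the word ``attractor,'' I will go beyond the Hessian test. The linear flow solves explicitly as $p_i(t)=q^*+e^{-2t/N}(p_i(0)-q^*)$, giving global exponential convergence within the cell. The same conclusion holds for discrete gradient descent with step $\eta<N$, via the contraction factor $|1-2\eta/N|<1$. The loss is also strictly convex on the region, so the equilibrium is the unique minimizer.

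The main obstacle is not computational. It is the piecewise nature of $\mathrm{NN}_{\mathcal{T}}$: trajectories could leave the Voronoi cell, and the loss is nondifferentiable on cell boundaries. I will handle this by making the statement explicitly local, as the Setup does. Then I will observe that each Voronoi cell of $q^*$ is convex and contains $q^*$, and that the flow moves each $p_i$ along the straight segment toward $q^*$. Trajectories starting in the cell therefore stay in it, so the region is forward-invariant and the local analysis is self-consistent. For discrete steps, the same invariance holds whenever $\eta\le N/2$, since each step is then a convex combination of $p_i$ and $q^*$.
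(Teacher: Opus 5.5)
Your proposal is correct and follows the paper's proof: you compute the per-point gradient $\tfrac{2}{N}(p_i-q^*)$ inside the fixed Voronoi cell, take its unique zero $p_i=q^*$, and note the block-diagonal Hessian $\tfrac{2}{N}I_{3k\times 3k}$. Your extras go slightly beyond the paper, which infers ``attractor'' from the Hessian alone and never checks that trajectories stay in the region where the computation holds: the explicit exponential solution, and the forward invariance of the convex cell under straight-line flow toward $q^*$ (or under discrete steps with $\eta\le N/2$).
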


\begin{proof}
\textbf{Part (a).}
Within the Voronoi cell where $\mathrm{NN}_{\mathcal{T}}(p_i) = q^*$, the gradient of the forward loss with respect to $p_i$ is
\begin{equation}
\frac{\partial \mathcal{L}_{\text{fwd}}}{\partial p_i} = \frac{2}{N}(p_i - q^*).
\end{equation}
Setting this to zero yields $p_i = q^*$.
Since $q^*$ is fixed, this solution is unique for each $i \in \{1, \ldots, k\}$.
Hence the unique equilibrium is the many-to-one collapse state $p_1 = p_2 = \cdots = p_k = q^*$.

\textbf{Part (b).}
The Hessian of the forward loss with respect to $p_i$ is
\begin{equation}
\frac{\partial^2 \mathcal{L}_{\text{fwd}}}{\partial p_i^2} = \frac{2}{N}\, I_{3 \times 3},
\end{equation}
which is positive definite.
For the full cluster, the joint Hessian is $\frac{2}{N}\, I_{3k \times 3k}$ (block-diagonal with identical blocks), which is positive definite.
Therefore the collapse equilibrium is a stable local minimum and the unique attractor within this Voronoi cell.
\end{proof}

\begin{remark}
Globally, source space is partitioned into Voronoi cells of the target points.
Within each cell, Proposition~1 guarantees that all source points collapse onto the corresponding target point.
The global attractor is thus the many-to-one collapse state where each source point sits exactly on its nearest target.
Note that during gradient descent a source point may cross a Voronoi boundary and be reassigned to a different target; however, this merely redirects the point toward a \emph{new} attractor of the same kind, so the many-to-one collapse structure is preserved globally.
\end{remark}

\subsection{Proposition 2: The Reverse Term Cannot Separate Collapsed Points}
\label{sec:proof_prop2}

\begin{proposition}
Let $k$ source points be collapsed at a target point: $p_1 = p_2 = \cdots = p_k = q^*$.
The reverse Chamfer gradient $\partial \mathcal{L}_{\emph{rev}} / \partial p_i$ is nonzero for at most one of the $k$ points; the remaining $k{-}1$ receive zero gradient.
\end{proposition}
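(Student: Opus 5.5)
The plan is to compute the reverse gradient explicitly from \eqref{eq:rev_loss_app} and then count how many of the collapsed points can actually receive it. For each target $q_j$ the term $\|q_j - \mathrm{NN}_{\mathcal{S}}(q_j)\|^2$ depends on the source set only through the single selected point $\mathrm{NN}_{\mathcal{S}}(q_j)$. I will make this selection explicit through an assignment map $\sigma(j) = \arg\min_{i}\|q_j - p_i\|$. This map is held fixed on the region of analysis, in the same spirit as the fixed-assignment setup used for Proposition~1. Ties are broken by a deterministic rule, for instance the smallest index, which is what exact nearest-neighbor search returns in practice. Differentiating gives
\begin{equation*}
\frac{\partial \mathcal{L}_{\text{rev\_app}}}{\partial p_i} = \frac{2}{M}\sum_{j\,:\,\sigma(j)=i}(p_i - q_j),
\end{equation*}
so a source point receives a nonzero reverse gradient only if it is the selected nearest neighbor of at least one target point.

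The key step is the collapsed configuration $p_1=\cdots=p_k=q^*$. For any target $q_j$, the $k$ collapsed points are all at the same distance $\|q_j-q^*\|$ from it, so they are exactly tied. The argmin, with its tie-break, therefore selects at most one of them; this is the distinguished index, say $i_0$. Two cases arise for a given $q_j$. If some other source point outside the cluster is strictly closer to $q_j$, then none of the $k$ points is selected. Otherwise the selected point is $p_{i_0}$, and crucially it is the same index for every $j$, because the tie-break depends only on the indices and not on $q_j$. Taking the union over all $j$, the set $\{i\in C : \sigma^{-1}(i)\neq\emptyset\}$ has at most one element. All other cluster points have empty preimage, and the displayed formula then gives them identically zero gradient.

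The main obstacle is the tie itself. At exact coincidence, $\mathcal{L}_{\text{rev\_app}}$ is not differentiable as a function of $(p_1,\dots,p_k)$, since it is a minimum over equal branches. The statement is therefore about the gradient that a nearest-neighbor implementation actually returns, not about a classical derivative. I would state this convention explicitly and justify it by a perturbation remark. If the points are perturbed to be distinct but very close, each target's preimage is still a single point. Targets far from the cluster see essentially a single ``representative'' point, while the remaining points can only win targets in a vanishingly small region around $q^*$. This makes the zero-gradient deadlock for the other $k-1$ points robust rather than an artifact of the tie-break.

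I would also note that the target $q^*$ itself contributes $p_{i_0}-q^*=0$. So even the one selected point may receive zero gradient, which is consistent with the ``at most one'' phrasing of the statement.
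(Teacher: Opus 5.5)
Your core argument is the paper's own proof. You fix the nearest-neighbor assignment and write the reverse gradient of $p_i$ as $\frac{2}{M}\sum_{j:\sigma(j)=i}(p_i-q_j)$. At exact coincidence, a tie-break that does not depend on the query (smallest index) sends every target served by the cluster to one and the same index, so the other $k-1$ points have empty preimage and zero reverse gradient. You are more explicit than the paper on two points: the tie-break must be query-independent, not merely deterministic; and the claim concerns the gradient an exact nearest-neighbor implementation returns, since $\mathcal{L}_{\text{rev}}$ is not differentiable at the tie.

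The perturbation remark is false and should be dropped, because it points the other way. Suppose the cluster is split to $p_i=q^*+\epsilon\delta_i$ with distinct $\delta_i$. For a target at distance $R\gg\epsilon$ in direction $u$, $\|q_j-p_i\|^2=\|q_j-q^*\|^2-2\epsilon R\langle u,\delta_i\rangle+O(\epsilon^2)$, so the winner is the point maximizing $\langle u,\delta_i\rangle$. Any two or more distinct points have at least two extreme points, and the cell of each extreme point is an unbounded cone of directions, not a vanishingly small region. So targets served by the cluster in different directions are won by different points, and their reverse gradients pull them toward different targets, i.e., apart.

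Equivalently, take the exactly collapsed state with $k\ge 2$ and some served target $q_j\neq q^*$. Moving a single point by $t\,d$ with $\langle q_j-q^*,d\rangle>0$ lowers $\mathcal{L}_{\text{rev}}$ at first order in $t$, since the remaining points at $q^*$ still serve every other target. It raises $\mathcal{L}_{\text{fwd}}$ only at order $t^2$. So the collapsed state is not even a local minimum of the bidirectional loss. The zero gradient of the $k-1$ points is an artifact of exact coincidence plus tie-breaking. The proposition holds as a statement about that gradient oracle, which is all the paper proves, but you cannot claim the deadlock is robust.
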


\begin{proof}
The reverse loss sums over target points:
\begin{equation}
\mathcal{L}_{\text{rev}} = \frac{1}{M}\sum_{j=1}^{M}\|q_j - \mathrm{NN}_{\mathcal{S}}(q_j)\|^2.
\end{equation}
Source point $p_i$ contributes to $\mathcal{L}_{\text{rev}}$ only if $\mathrm{NN}_{\mathcal{S}}(q_j) = p_i$ for some target point $q_j$.
When $p_1 = \cdots = p_k = q^*$, all $k$ points occupy the same spatial location.
For any target point $q_j$ whose nearest source is at $q^*$, the nearest-neighbor search returns exactly one index $i^*$ via tie-breaking (typically the one with the smallest index).
Therefore:
\begin{equation}
\frac{\partial \mathcal{L}_{\text{rev}}}{\partial p_i} =
\begin{cases}
\displaystyle -\frac{2}{M}\sum_{\substack{j:\, \mathrm{NN}_{\mathcal{S}}(q_j) = p_{i^*}}} (q_j - q^*) & \text{if } i = i^*, \\[8pt]
0 & \text{if } i \neq i^*.
\end{cases}
\end{equation}
At most one point ($i^*$) receives a nonzero gradient from the reverse term; the remaining $k{-}1$ points have zero reverse gradient.

Furthermore, at the collapse point the forward gradient is also zero (by Proposition~1), so these $k{-}1$ points have zero total gradient under the bidirectional objective.
They are stuck at $q^*$ with no mechanism for separation.
\end{proof}

\begin{remark}
This explains why bidirectional Chamfer distance cannot resolve collapse: the forward term is at equilibrium (Proposition~1), and the reverse term provides a separation signal to at most one of the $k$ collapsed points.
The remaining $k{-}1$ points are in a deadlock state with zero gradient from both terms.
\end{remark}

\subsection{Proposition 3: Local Regularizers Cannot Alter Cluster-Level Drift}
\label{sec:proof_prop3}

\begin{proposition}
Let $C = \{p_1, \ldots, p_k\}$ be a cluster of source points sharing the same nearest target $q^* = \mathrm{NN}_{\mathcal{T}}(p_i)$.
Let $R$ be any local regularizer that is translationally invariant, i.e., $R(\{p_i + \mathbf{v}\}) = R(\{p_i\})$ for any $\mathbf{v} \in \mathbb{R}^3$.
Then the dynamics of the cluster centroid $\bar{p} = \frac{1}{k}\sum_{i} p_i$ under the combined loss $\mathcal{L}_{\emph{fwd}} + \lambda R$ are independent of $\lambda$:
\begin{equation}
\frac{d\bar{p}}{dt} = -\frac{2\eta}{N}(\bar{p} - q^*),
\end{equation}
where $\eta$ is the learning rate.
\end{proposition}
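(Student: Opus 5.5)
The plan is to model the optimization as gradient flow, $\dot p_i = -\eta\,\partial(\mathcal{L}_{\text{fwd}} + \lambda R)/\partial p_i$, on the region where the nearest-neighbor assignment is fixed. Then average over $i \in C$ and show that the $\lambda$-dependent part averages to zero.

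The forward part follows directly from Proposition~1: in the Voronoi cell of $q^*$, $\partial \mathcal{L}_{\text{fwd}}/\partial p_i = \tfrac{2}{N}(p_i - q^*)$. Averaging over the $k$ cluster points gives $\tfrac{2}{N}(\bar p - q^*)$, since the map is affine in $p_i$ and all points share the same $q^*$.

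The key step is the regularizer term. I will differentiate the translation-invariance identity $R(\{p_i + \mathbf{v}\}) = R(\{p_i\})$ with respect to $\mathbf{v}$ at $\mathbf{v} = 0$. By the chain rule this gives $\sum_i \partial R/\partial p_i = 0$, which is the "pairwise forces cancel" statement. Dividing by $k$ shows that the centroid contribution $-\tfrac{\eta\lambda}{k}\sum_i \partial R/\partial p_i$ vanishes for every $\lambda$. Combining the two parts gives $\tfrac{d\bar p}{dt} = -\tfrac{2\eta}{N}(\bar p - q^*)$, which is independent of $\lambda$.

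The main obstacle is the mismatch in the index set. The identity $\sum \partial R/\partial p_i = 0$ holds when the sum runs over all arguments of $R$, but the centroid sums only over $C$. I would resolve this by interpreting the hypothesis as the paper's setting intends: $R$ is a local regularizer restricted to the cluster, so $R$ depends only on $\{p_i\}_{i\in C}$, and translation invariance refers to translating exactly these points. Under that reading the cancellation is exact.

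If $R$ also couples cluster points to outside neighbors, the argument no longer goes through as stated. In that case I would state explicitly that the proof assumes the cluster is isolated from other particles within the regularizer's support; otherwise boundary terms appear and the $\lambda$-independence fails. I would also note two smaller assumptions: the assignment must stay fixed during the interval considered, and $R$ must be differentiable, which excludes singular configurations such as $1/d^2$ repulsion at coincident points.
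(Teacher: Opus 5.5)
Your proposal is correct and follows the paper's proof essentially verbatim: gradient flow averaged over the cluster, the forward term summing to $\frac{2k}{N}(\bar{p} - q^*)$, and the regularizer term vanishing by differentiating the translation-invariance identity at $\mathbf{v}=0$. Your index-set caveat is well taken: the paper's proof silently assumes your reading, namely invariance under translating the cluster points alone, and the $\lambda$-independence fails if $R$ couples cluster points to neighbors outside $C$.
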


\begin{proof}
Under gradient descent, the centroid evolves as
\begin{equation}
\frac{d\bar{p}}{dt} = -\frac{\eta}{k}\sum_{i=1}^{k}\left(\frac{\partial \mathcal{L}_{\text{fwd}}}{\partial p_i} + \lambda \frac{\partial R}{\partial p_i}\right).
\end{equation}
We evaluate each sum separately.

\textbf{Forward term.}
\begin{equation}
\sum_{i=1}^{k} \frac{\partial \mathcal{L}_{\text{fwd}}}{\partial p_i} = \sum_{i=1}^{k} \frac{2}{N}(p_i - q^*) = \frac{2}{N}\left(\sum_{i=1}^{k} p_i - k\, q^*\right) = \frac{2k}{N}(\bar{p} - q^*).
\end{equation}

\textbf{Regularizer term.}
Since $R$ is translationally invariant, differentiating with respect to the translation parameter $\mathbf{v}$ at $\mathbf{v}=0$ gives $\sum_{i=1}^{k} \partial R / \partial p_i = 0$ directly.

\textbf{Combined.}
Substituting:
\begin{equation}
\frac{d\bar{p}}{dt} = -\frac{\eta}{k}\left(\frac{2k}{N}(\bar{p} - q^*) + \lambda \cdot 0\right) = -\frac{2\eta}{N}(\bar{p} - q^*).
\end{equation}
The centroid moves toward $q^*$ at a rate independent of $\lambda$.
Local regularizers may rearrange points within the cluster (\eg, push them apart via repulsion), but they cannot alter the cluster's net drift toward the target.
\end{proof}

\begin{remark}
Translational invariance is satisfied by all standard local regularizers used in point cloud optimization; pairwise antisymmetry (Newton's third law) is one sufficient condition for it:
\begin{itemize}
    \item \textbf{Repulsion}: $f(p_i, p_j) = \lambda_r (p_i - p_j) / \|p_i - p_j\|^{2+\alpha}$, which satisfies $f(p_i,p_j) = -f(p_j,p_i)$.
    \item \textbf{Laplacian smoothness}: The gradient $p_i - \frac{1}{|\mathcal{N}(i)|}\sum_j p_j$ decomposes into pairwise terms that cancel under summation over symmetric neighborhoods.
    \item \textbf{Volume preservation}: Local volume penalties produce gradients from pairwise particle interactions that obey Newton's third law.
    \item \textbf{Density-aware re-weighting (DCD)}: DCD reweights each forward contribution by $w_i = 1/\hat{\rho}(p_i)$, yielding a primary gradient $\frac{2w_i}{N}(p_i - \mathrm{NN}_{\mathcal{T}}(p_i))$ that preserves the direction toward the nearest target, plus a secondary term $\propto \|p_i - \mathrm{NN}_{\mathcal{T}}(p_i)\|^2\,\partial w_i/\partial p_i$ that vanishes at the collapse equilibrium $\|p_i - q^*\| = 0$.  Near collapse, DCD therefore leaves the attraction structure invariant, consistent with the identical t$\to$s values between DCO and DCD in Table~2.
\end{itemize}
\end{remark}

\subsection{Corollary 1: Collapse Suppression Requires Non-Local Coupling}
\label{sec:proof_cor1}

\begin{corollary}
Taken together, Propositions~1 and~3 imply that any coupling mechanism whose gradients depend only on particles within a local neighborhood $\mathcal{N}(i)$ cannot, in general, suppress many-to-one collapse while still reducing the Chamfer objective.
Collapse suppression therefore requires coupling that propagates beyond $\mathcal{N}(i)$.
\end{corollary}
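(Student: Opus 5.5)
The corollary is argued by contradiction. Suppose a coupling mechanism is added to the forward objective, with gradient $\lambda\,\partial R/\partial p_i$ depending only on particles in a local neighborhood $\mathcal{N}(i)$. Suppose further that it suppresses collapse, in the sense of keeping the particles of some cluster $C$ separated at a distance bounded below by some $\delta>0$, while the Chamfer objective is still reduced. To turn this into a checkable statement, I make ``local'' precise in the way Proposition~3 needs. If the neighborhoods are contained in the cluster (or the cluster is a union of neighborhoods), then $R$ restricted to $C$ depends only on relative positions. Examples are pairwise forces obeying Newton's third law, Laplacian terms and volume terms. Such an $R$ is translation invariant on $C$, so $\sum_{i\in C}\partial R/\partial p_i=0$.

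\textbf{Step 1 (centroid is unaffected).} I apply Proposition~3 directly. The centroid satisfies $d\bar p/dt=-\tfrac{2\eta}{N}(\bar p-q^*)$ whatever the value of $\lambda$. Hence $\bar p\to q^*$ exponentially, so local coupling cannot hold the cluster away from the attractor identified in Proposition~1.

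\textbf{Step 2 (the spread is charged to the Chamfer objective).} I decompose the forward loss over $C$ as
\begin{equation}
\sum_{i\in C}\|p_i-q^*\|^2 = k\|\bar p-q^*\|^2+\sum_{i\in C}\|p_i-\bar p\|^2 .
\end{equation}
Once Step 1 forces $\bar p\to q^*$, only the spread term remains. There are then two cases. In the first, the regularizer keeps the spread at least of order $\delta^2$; this leaves a strictly positive residual forward loss that is set entirely by $R$, so the mechanism is working against the Chamfer objective rather than alongside it. In the second, the spread tends to zero, which is exactly the collapse state of Proposition~1, and the Hessian there is positive definite, so that state is stable. Proposition~2 shows the reverse term cannot break this dichotomy, since it acts on at most one of the $k$ collapsed points. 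Either way, suppression and descent cannot both be obtained.

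\textbf{Step 3 (what non-local coupling changes).} I close by pointing out where the argument uses locality. Translation invariance on $C$ fails when $\partial R/\partial p_i$ also depends on particles outside $C$. In that case $\sum_{i\in C}\partial R/\partial p_i\neq 0$ in general, so the centroid itself can be redirected, for example toward uncovered target regions. This is the loophole that the shared grid in MPM and the Fourier parameterization in the 2D experiment exploit.

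\textbf{Main obstacle.} The hard step is making ``in general'' and ``suppress'' rigorous. Locality in the sense of $\mathcal{N}(i)$ gives translation invariance on $C$ only when the neighborhoods do not cross the boundary of the cluster. A repulsive force from a particle in an adjacent Voronoi cell breaks this. I would handle it by restating the result for clusters that are unions of neighborhoods, and then arguing that at collapse the clusters are spatially isolated, since distinct targets are separated. The claim therefore holds near the attractor, which is the regime the corollary concerns.
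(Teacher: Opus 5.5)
Your Step~1 reproduces the paper's Case~1, and your Step~3 makes the paper's closing remark about non-local coupling more precise. The gap is the scoping move in your first paragraph. There you define ``local'' as ``depends only on relative positions within $C$'', which is exactly the hypothesis of Proposition~3, and then prove the corollary only for that class. The statement, however, covers every mechanism whose gradient depends only on $\{p_j : j\in\mathcal{N}(i)\}$, and locality does not imply invariance under translating $C$. Two examples fall outside your class. A $k$-NN repulsive force applied from each point's own neighbors does not cancel over non-mutual pairs, because the $k$-NN relation is not symmetric. A local term with any explicit dependence on absolute position is not translation invariant at all. For such $G$, $\sum_{i\in C}\partial G/\partial p_i\neq 0$ in general, so Step~1 is unavailable and your argument says nothing. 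The paper gives this class its own Case~2, argued by a force-magnitude dichotomy. Near $q^*$, either the local force is weaker than the $\tfrac{2}{N}I$ restoring basin and collapse proceeds, or it is stronger and effectively replaces the Chamfer objective with its own.

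You can close this with your own Step~2, which does not actually need Step~1. Your decomposition already gives $\sum_{i\in C}\|p_i-q^*\|^2\ge\sum_{i\in C}\|p_i-\bar p\|^2$ wherever the centroid sits. At any stationary point of $\mathcal{L}_{\text{fwd}}+\lambda G$ inside the cell, $p_i-q^*=-\tfrac{N\lambda}{2}\,\partial G/\partial p_i$. The cluster's forward residual is therefore exactly $\tfrac{N\lambda^2}{4}\sum_{i\in C}\|\partial G/\partial p_i\|^2$. Any spread is paid for in forward loss in proportion to the squared coupling force, which is a quantitative version of the paper's Case~2.

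Be aware that this trade-off holds for every $G$ within a fixed assignment region, local or not. In both your version and the paper's, the content that genuinely depends on locality is the centroid lock of Step~1, together with escape by reassignment across cells in Step~3.

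Your ``main obstacle'' point is a real one that the paper takes for granted. The invariance used in Proposition~3 is invariance under translating $C$ alone, and it fails whenever a neighborhood straddles the cluster boundary. Restricting to isolated clusters near the attractor is a sensible repair.
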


\begin{proof}[Argument]
Suppose a coupling mechanism $G$ suppresses collapse, but its gradient is local:
\begin{equation}
\frac{\partial G}{\partial p_i} = g\bigl(\{p_j : j \in \mathcal{N}(i)\}\bigr).
\end{equation}

\textbf{Case 1 (pairwise local forces).}
If $G$ is translationally invariant, then by Proposition~3 its net contribution to the cluster centroid vanishes.
The centroid dynamics remain governed solely by the forward Chamfer term, and collapse proceeds as in the unregularized case.

\textbf{Case 2 (general local forces).}
Even if $G$ does not decompose into pairwise forces, a local mechanism faces a tension arising from the positive-definite Hessian of Proposition~1.
The forward Chamfer gradient defines a strictly convex basin around $q^*$; any perturbation away from $q^*$ is restored by the $\frac{2}{N}I$ curvature.
A local coupling term $G$ can oppose this basin only within its neighborhood radius.
If $\|{\partial G}/{\partial p_i}\| < \|{\partial \mathcal{L}_{\text{fwd}}}/{\partial p_i}\|$ near $q^*$, the Chamfer basin dominates and collapse proceeds.
If $\|{\partial G}/{\partial p_i}\| > \|{\partial \mathcal{L}_{\text{fwd}}}/{\partial p_i}\|$, the coupling overwhelms the Chamfer signal in that region, effectively replacing the Chamfer objective with the regularizer's own objective.
Table~2 of the main paper confirms this trade-off empirically: strong local repulsion ($\lambda = 0.1$) worsens two-sided CD from 0.286 to 0.309.

In neither case can purely local coupling suppress collapse while improving the Chamfer objective.
Non-local coupling circumvents this trade-off by providing a restoring force that does not originate from the immediate neighborhood of the collapsing cluster, and therefore does not oppose the per-point Chamfer gradient locally.
In MPM, this is realized through continuum stress propagation on the shared Eulerian grid: moving a single particle generates elastic stress that propagates throughout the domain, coupling distant particles through the continuum.
\end{proof}


\section{Implementation Details}
\label{sec:implementation}

\subsection{MPM Simulation Parameters}
\label{sec:sim_params}

All experiments use a differentiable Material Point Method (MPM) simulation based on the framework of Xu~et~al.~\cite{xu2025differentiable}.
Table~\ref{tab:sim_params} summarizes the simulation parameters shared across all experiments.

\begin{table}[h]
\centering
\caption{MPM simulation parameters. All experiments share the same material model and grid configuration; only particle resolution varies.}
\label{tab:sim_params}
\begin{tabular}{lcc}
\toprule
Parameter & Symbol & Value \\
\midrule
\multicolumn{3}{l}{\textit{Grid}} \\
\quad Grid spacing & $\Delta x$ & 1.0 \\
\quad Domain & $[\mathbf{x}_{\min}, \mathbf{x}_{\max}]$ & $[-16, 16]^3$ \\
\quad Effective resolution & & $32^3$ nodes \\
\midrule
\multicolumn{3}{l}{\textit{Material (Neo-Hookean)}} \\
\quad First Lam\'e parameter & $\lambda$ & 38\,889 \\
\quad Shear modulus & $\mu$ & 58\,333 \\
\quad Density & $\rho$ & 75.0 \\
\midrule
\multicolumn{3}{l}{\textit{Time integration}} \\
\quad Timestep & $\Delta t$ & $8.33 \times 10^{-3}$ \\
\quad Steps per frame & & 10 \\
\quad Drag coefficient & & 0.5 \\
\quad Smoothing factor & $\alpha_s$ & 0.955 \\
\midrule
\multicolumn{3}{l}{\textit{Particles}} \\
\quad Points per cell (default) & PPC$^{1/3}$ & 3 (${\sim}$37K particles) \\
\quad Points per cell (dragon) & PPC$^{1/3}$ & 4 (${\sim}$89K particles) \\
\bottomrule
\end{tabular}
\end{table}

The Lam\'e parameters correspond to Young's modulus $E = 140\,000$ and Poisson's ratio $\nu = 0.20$, modeling a stiff elastic solid.
The drag coefficient provides velocity damping at each timestep to stabilize the simulation.
The smoothing factor $\alpha_s$ controls exponential smoothing of the deformation gradient field across frames, with $\alpha_s = 0.955$ for the pairwise evaluation and $\alpha_s = 0.95$ for the teaser experiments.

\subsection{Optimization Parameters}
\label{sec:opt_params}

Table~\ref{tab:opt_params} summarizes the optimization and Chamfer schedule parameters.

\begin{table}[h]
\centering
\caption{Optimization and Chamfer schedule parameters.}
\label{tab:opt_params}
\begin{tabular}{lcc}
\toprule
Parameter & Symbol & Value \\
\midrule
\multicolumn{3}{l}{\textit{Gradient descent}} \\
\quad Number of frames & $T$ & 40 \\
\quad Forward--backward passes & & 3 \\
\quad GD iterations per pass & & 1 \\
\quad Line search iterations & & 5 \\
\quad Initial step size & $\alpha_0$ & 0.01 \\
\quad Adaptive step target norm & & 2\,500 \\
\midrule
\multicolumn{3}{l}{\textit{Chamfer schedule}} \\
\quad Chamfer weight & $w_{\text{ch}}$ & 10.0 \\
\quad Chamfer start frame & $t_{\text{start}}$ & 5 \\
\quad Chamfer ramp frames & $t_{\text{ramp}}$ & 10 \\
\quad Reverse weight (base) & $w_{\text{rev,base}}$ & 1.0 \\
\quad Reverse mode & & coupled (Eq.~\ref{eq:coupled_schedule}) \\
\quad Reverse gradient clamp ratio & $\kappa$ & 3.0 \\
\midrule
\multicolumn{3}{l}{\textit{Physics decay (Phase 2)}} \\
\quad Decay start frame & & 15 \\
\quad Decay ramp frames & & 5 \\
\quad Final physics weight & $w_{\text{phys,final}}$ & 0.1 \\
\quad Smoothing (Phase 2) & & 0.7 (from frame 15) \\
\bottomrule
\end{tabular}
\end{table}

The Chamfer loss is linearly ramped from frame~5 to frame~15 ($w_{\text{ch}}(t) = w_{\text{ch}} \cdot \min(1, (t - t_{\text{start}}) / t_{\text{ramp}})$ for $t \geq t_{\text{start}}$, zero otherwise).
The physics weight decays linearly from 1.0 to $w_{\text{phys,final}} = 0.1$ over frames 15--20, transitioning from physics-dominated to Chamfer-guided refinement.
The coupled reverse schedule (Eq.~\ref{eq:coupled_schedule}) ties $w_{\text{rev}}(t) = w_{\text{rev,base}} \cdot w_{\text{phys}}(t)$, ensuring that reverse (t$\to$s) pressure never exceeds the elastic resistance provided by the physics prior.

\paragraph{Dragon case study.}
The Sphere$\to$dragon experiment (Section~\ref{sec:dragon}) uses 4 particles per cell (${\sim}$89K particles) with a C++ inline bidirectional Chamfer gradient implementation for computational efficiency.
All other parameters remain identical to the default configuration.

\paragraph{Pairwise evaluation.}
The 20-pair evaluation (Section~\ref{sec:exp_pairwise}) uses 3 particles per cell with the same schedule.
For pairs with non-sphere sources, the source mesh is sampled to match the particle count of the corresponding sphere initialization.


\section{Additional Quantitative Results}
\label{sec:additional_results}

\subsection{Pairwise Source-to-Target Chamfer Distance}
\label{sec:pairwise_s2t}

Table~\ref{tab:pairwise_s2t} reports the s$\to$t component of Chamfer distance for all 20 directed morphing pairs.
Our method improves s$\to$t on all 20 pairs, confirming that the forward Chamfer improvement is universal across source geometries.

\begin{table}[h]
\centering
\scriptsize
\caption{Pairwise morphing results (s$\to$t CD) in matrix format.  Each cell shows Physics\,/\,\textbf{Ours}.  \textbf{Bold}: improvement over physics-only.  Our method improves s$\to$t on all 20 directed pairs.}
\label{tab:pairwise_s2t}
\setlength{\tabcolsep}{3pt}
\begin{tabular}{l ccccc}
\toprule
Src\,$\downarrow$\,/\,Tgt\,$\rightarrow$ & Sphere & Bunny & Duck & Cow & Teapot \\
\midrule
Sphere & ---                      & 0.181\,/\,\textbf{0.157} & 0.184\,/\,\textbf{0.173} & 0.178\,/\,\textbf{0.167} & 0.180\,/\,\textbf{0.148} \\
Bunny  & 0.199\,/\,\textbf{0.159} & --- & 0.186\,/\,\textbf{0.167} & 0.178\,/\,\textbf{0.159} & 0.187\,/\,\textbf{0.150} \\
Duck    & 0.181\,/\,\textbf{0.168} & 0.183\,/\,\textbf{0.178} & --- & 0.184\,/\,\textbf{0.176} & 0.190\,/\,\textbf{0.172} \\
Cow   & 0.186\,/\,\textbf{0.167} & 0.188\,/\,\textbf{0.156} & 0.325\,/\,\textbf{0.168} & --- & 0.193\,/\,\textbf{0.161} \\
Teapot & 0.176\,/\,\textbf{0.146} & 0.178\,/\,\textbf{0.165} & 0.273\,/\,\textbf{0.176} & 0.178\,/\,\textbf{0.169} & --- \\
\bottomrule
\end{tabular}
\end{table}

\subsection{2D Cross-Domain Collapse Experiment}
\label{sec:2d_collapse}

We provide full implementation details for the 2D collapse experiment presented in Figure~\ref{fig:2d_experiment}.

\paragraph{Setup.}
A source set of $N{=}600$ points uniformly sampled on a circle boundary ($r{=}0.8$, centered at the origin) is optimized to match a target star boundary ($M{=}200$ points, 5 arms, inner radius 0.25, outer radius 1.0).
We compare three optimization strategies, all running for 300 gradient steps with bidirectional Chamfer loss ($w_{\text{fwd}}{=}w_{\text{rev}}{=}1.0$) and 95th-percentile gradient clipping.

\begin{enumerate}
    \item \textbf{DCO} (per-point CD optimization, lr${=}0.015$): Each source point is independently moved along its Chamfer gradient. As predicted by Proposition~1, points collapse onto the nearest star vertices, leaving large target regions uncovered.
    \item \textbf{DCO + Repulsion} (lr${=}0.015$, $k$-NN with $k{=}6$, $\lambda_{\text{rep}}{=}2{\times}10^{-4}$): A local repulsive force ($1/d^2$) from the 6 nearest neighbors is added to the Chamfer gradient. Consistent with Proposition~3, the centroid drift is unchanged and collapse persists.
    \item \textbf{Shared-basis deformation} (lr${=}0.003$): The boundary is parameterized as $r(\theta) = a_0 + \sum_{k=1}^{K}[a_k \cos(k\theta) + b_k \sin(k\theta)]$ with $K{=}12$ Fourier modes. All points share the same coefficients $\{a_k, b_k\}$; the CD gradient backpropagates through the parameterization to update these shared coefficients. This provides global coupling analogous to how MPM couples particles through a shared Eulerian grid. $K{=}12$ is chosen to be the minimum number of modes that can represent a 5-arm star (5 concavities require at least $k{=}5$ harmonics; doubling to $K{=}12$ provides sufficient capacity while remaining low-dimensional).
\end{enumerate}

\paragraph{Metrics.}
We report three quantitative measures:
\begin{itemize}
    \item \textbf{Two-sided CD}: the standard bidirectional Chamfer distance between the optimized source and the target.
    \item \textbf{Coverage} (\%): the fraction of target points that have at least one source point within a distance threshold $\epsilon{=}0.05$.
    \item \textbf{Cluster fraction} (\%): the fraction of source points that are within $\epsilon{=}0.02$ of another source point, measuring the degree of many-to-one collapse.
\end{itemize}

\paragraph{Results.}
Table~\ref{tab:2d_results} summarizes the quantitative comparison.

\begin{table}[h]
\centering
\caption{Quantitative results of the 2D collapse experiment (circle $\to$ star).}
\label{tab:2d_results}
\begin{tabular}{lccc}
\toprule
Method & Two-sided CD $\downarrow$ & Coverage (\%) $\uparrow$ & Cluster Fraction (\%) $\downarrow$ \\
\midrule
DCO & 0.352 & 38 & 97 \\
DCO + Repulsion & 0.299 & 50 & 97 \\
Shared-basis (Fourier, $K{=}12$) & \textbf{0.123} & \textbf{62} & \textbf{81} \\
\bottomrule
\end{tabular}
\end{table}

DCO collapses nearly all source points onto the five-star vertices (cluster fraction 97\%), covering only 38\% of the target.
Adding local repulsion marginally improves coverage (50\%) but does not reduce the cluster fraction, confirming Proposition~3: translational invariance ensures the regularizer gradient sums to zero, so the net drift toward the target is unchanged.
The shared-basis deformation reduces two-sided CD by 65\% (0.352 $\to$ 0.123) and cluster fraction from 97\% to 81\%, demonstrating that global coupling through shared parameters effectively suppresses collapse, consistent with Corollary~1.

The corresponding visualizations are shown in Figure~\ref{fig:2d_experiment}.


\section{Additional Figures and Ablation}
\label{sec:additional_figures}

\subsection{Frame-Wise Chamfer Distance Convergence}
\label{sec:cd_convergence}

\begin{figure}[ht]
\centering
\includegraphics[width=\linewidth]{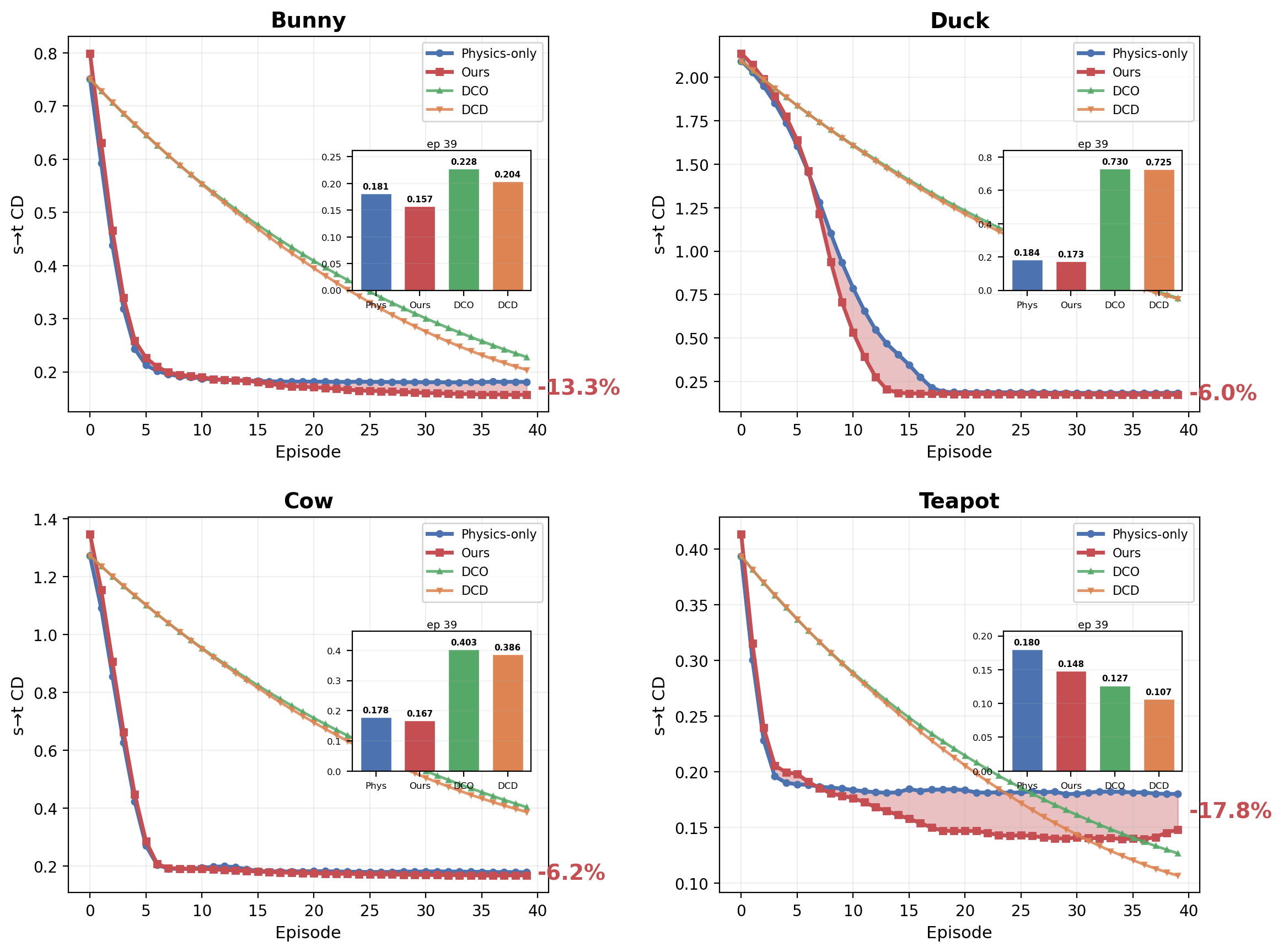}
\caption{Frame-wise source-to-target Chamfer distance across four target shapes. DCO and DCD converge to substantially worse plateaus on geometrically complex targets, while our method consistently refines the physics prior. On the nearly convex teapot, DCO achieves comparable s$\to$t but at the cost of structural collapse.}
\label{fig:CD_Loss_suppl}
\end{figure}

Figure~\ref{fig:CD_Loss_suppl} shows the s$\to$t Chamfer distance over 40 frames for all four target shapes. DCO and DCD plateau early at substantially worse values on geometrically complex targets, while our method continues to improve as the schedule transitions from physics-dominated to Chamfer-guided refinement.

\subsection{Trajectory and Internal Structure Comparison}
\label{sec:trajectory_suppl}

The trajectory and cross-section comparison between DCD and our method (Cow$\to$Duck) is presented in Figure~\ref{fig:trajectory}.
This comparison highlights the interior hollowing artifact unique to collapse-prone objectives, which global coupling prevents throughout the trajectory.

\subsection{Pairwise Scatter Plot}
\label{sec:pairwise_scatter_suppl}

\begin{figure}[ht]
\centering
\includegraphics[width=0.7\linewidth]{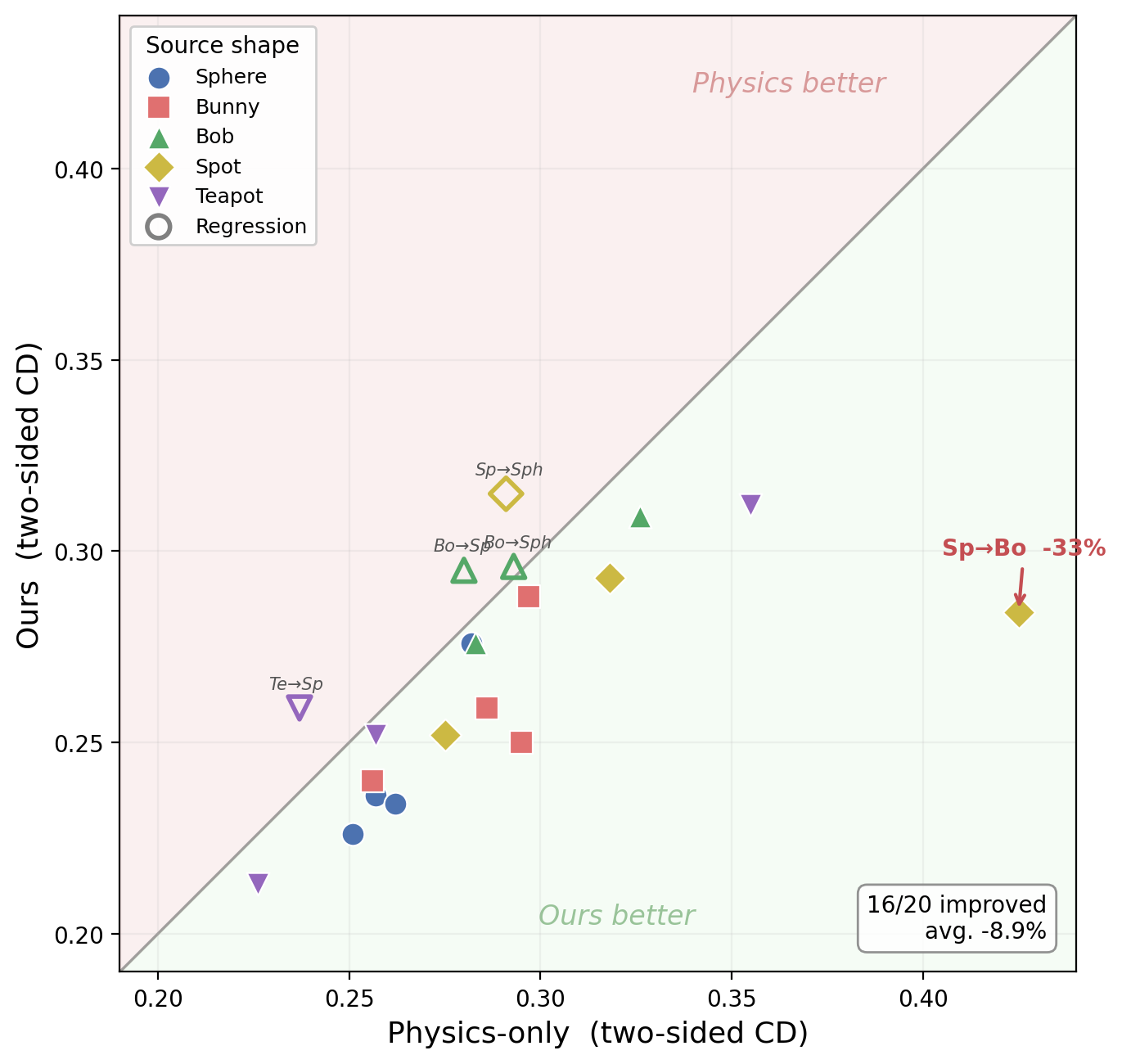}
\caption{\textbf{Pairwise two-sided CD: Physics-only vs.\ Ours.}
Each of the 20 directed pairs is one point; marker shape indicates the source.
Points below the diagonal (16/20) denote improvement by our method.
The four regressions all involve convex targets (Sphere or Cow) where the physics baseline already achieves near-optimal coverage.}
\label{fig:pairwise_scatter_suppl}
\end{figure}

Figure~\ref{fig:pairwise_scatter_suppl} visualizes the per-pair two-sided CD comparison across all 20 directed morphing pairs. Points below the diagonal indicate improvement by our method; the four regressions above the diagonal correspond to convex targets where the physics baseline already achieves near-optimal coverage.

\subsection{Ablation Study}
\label{sec:ablation_suppl}

\begin{table}[ht]
\centering
\scriptsize
\caption{Ablation study on Sphere$\to$bunny. Each row modifies one component from the full method.  Our default configuration achieves the best two-sided CD.}
\label{tab:ablation_suppl}
\begin{tabular}{lccc}
\toprule
Configuration & s$\to$t $\downarrow$ & t$\to$s $\downarrow$ & Two-sided $\downarrow$ \\
\midrule
Full method (Ours) & \textbf{0.157} & \textbf{0.163} & \textbf{0.226} \\
\midrule
\multicolumn{4}{l}{Coupled schedule} \\
\quad Fixed $w_{\text{rev}} = 1.0$ (no coupling) & 0.158 & 0.176 & 0.237 \\
\quad Fixed $w_{\text{rev}} = 0.5$ & 0.158 & 0.182 & 0.241 \\
\midrule
\multicolumn{4}{l}{Clamp ratio $\kappa$} \\
\quad $\kappa = 1$ & 0.162 & 0.184 & 0.245 \\
\quad $\kappa = 5$ & 0.156 & 0.179 & 0.237 \\
\midrule
\multicolumn{4}{l}{Physics decay timing} \\
\quad Decay at frame 15 & 0.158 & 0.179 & 0.239 \\
\quad Decay at frame 25 & 0.158 & 0.180 & 0.240 \\
\bottomrule
\end{tabular}
\end{table}

We ablate the key design choices of our method on the Sphere$\to$bunny pair (Table~\ref{tab:ablation_suppl}).
Most configurations fall within $\pm 0.003$ of the full method in s$\to$t, indicating broad robustness.
The main sensitivity is to the clamp ratio~$\kappa$: aggressive clamping ($\kappa{=}1$) discards gradient signal, degrading two-sided CD by $+$0.019, while relaxing to $\kappa{=}5$ slightly improves s$\to$t at the cost of t$\to$s.
The coupled reverse schedule shows clear asymmetry: a fixed $w_{\text{rev}}{=}0.5$ underperforms ($+$0.015) because weakened reverse pressure cannot counteract many-to-one collapse.
Physics decay timing between frames~15 and~25 has negligible effect ($\leq$0.014).

\end{document}